\theoremstyle{plain}
\newtheorem{theorem}{Theorem}
\newtheorem{lemma}{Lemma}
\newtheorem{axiom}{Axiom}
\title{Quantifying In-Context Reasoning Effects and Memorization Effects in LLMs}
\author{%
  Siyu Lou \\
  Shanghai Jiao Tong University\\
  Eastern Institute of Technology, Ningbo\\
  \texttt{siyu.lou@sjtu.edu.cn} \\
   \And
   Yuntian Chen \\
  Eastern Institute of Technology, Ningbo \\
  \texttt{ychen@eitech.edu.cn} \\
     \AND
   Xiaodan Liang \\
  Sun Yat-Sen University\\
  \texttt{xdliang328@gmail.com} \\
       \And
 Liang Lin \\
  Sun Yat-Sen University\\
  \texttt{linlng@mail.sysu.edu.cn} \\
         \And
 Quanshi Zhang\thanks{corresponding author} \\
  Shanghai Jiao Tong University\\
  \texttt{zqs1022@sjtu.edu.cn} \\
}
\begin{document}

\maketitle

\begin{abstract}
In this study, we propose an axiomatic system to define and quantify the precise memorization and in-context reasoning effects used by the large language model (LLM) for language generation. These effects are formulated as non-linear interactions between tokens/words encoded by the LLM. Specifically, the axiomatic system enables us to categorize the memorization effects into foundational memorization effects and chaotic memorization effects, and further classify in-context reasoning effects into enhanced inference patterns, eliminated inference patterns, and reversed inference patterns. Besides, the decomposed effects satisfy the sparsity property and the universal matching property, which mathematically guarantee that the LLM's confidence score can be faithfully decomposed into the memorization effects and in-context reasoning effects. Experiments show that the clear disentanglement of memorization effects and in-context reasoning effects enables a straightforward examination of detailed inference patterns encoded by LLMs.
\end{abstract}

\section{Introduction}\label{sec:intro}
It is a long-standing issue whether a large language model (LLM) conducts inference based on both memorized knowledge and the reasoning logic. There have been many empirical studies of analyzing the reasoning or memorization effects in the inference of an LLM. For example, previous studies usually tested an LLM's reasoning capacity on specific reasoning tasks~\cite{yu2019reclor,nie2020adversarial,tian2021diagnosing,liu2021logiqa}, or measuring memorization capacity by applying the question-answering task on common-sense questions~\cite{feldman2020neural,carlini2022quantifying,zhang2024counterfactual}. 

However, going beyond the empirical testing, how to formulate and quantify the exact reasoning effects and memorization effects in mathematics used by an LLM is still an open problem. It is because there is no clear boundary between the memorization and reasoning, and the reasoning logic itself is also memorized by the LLM.  \textbf{Therefore, in this study, we only focus on the in-context reasoning and context-agnostic memorization with relatively clear boundaries}, as follows. Given an input prompt $\mathbf{x}=[x_1,x_2,\dots, x_n]$, let us divide the prompt $\mathbf{x}$ into an in-context premise prompt $\mathbf{x}_{\text{p}}$ and a question prompt $\mathbf{x}_{\text{q}}$, \emph{e.g.}, $\mathbf{x}_{\text{p}} = \{\textit{John and Mary are married.}\}$, $\mathbf{x}_{\text{q}}=\{\textit{James is the son of John. Mary is the mother of}\}$. Let us use $v(x_{n+1} | \mathbf{x})=\log p(x_{n+1}|\mathbf{x})/ \left(1-p(x_{n+1}|\mathbf{x})\right)$ to measure the confidence score, where $p(x_{n+1}|\mathbf{x})$ to denote the probability of generating the next token $x_{n+1}$. When we remove the premise to break the reasoning chain, the language generation on the question $v_{\text{\rm mem}}(x_{n+1}|\mathbf{x})  = v(x_{n+1}|\mathbf{x}_{\text q})$ can be regarded as the inference result based on context-agnostic memorization when only the question is provided as input. Then, the output difference $v_{\text{\rm rsn}}(x_{n+1}|\mathbf{x}) = v(x_{n+1}|\mathbf{x} = \{\mathbf{x}_{\text{p}} , \mathbf{x}_{\text q}\}) - v(x_{n+1}|\mathbf{x}_{\text q})$ with the premise is owing to the in-context reasoning effect. 

Instead of simply measuring the overall reasoning utility $v_{\text{\rm rsn}}(x_{n+1}|\mathbf{x})$ and memorization utility $v_{\text{\rm mem}}(x_{n+1}|\mathbf{x})$, as Figure~\ref{fig:fig1} shows, \textbf{we hope to propose a new axiom system to decompose these utilities into a set of reasoning patterns and memorization patterns,} which can be considered as a more detailed and rigorous explanation of the underlying inference patterns encoded by the LLM. 

Although it is counterintuitive, \citet{li2023does} have universally observed and \citet{ren2024where} have mathematically proven that given an input prompt $\mathbf{x}$, the confidence score $v(x_{n+1} | \mathbf{x})$ of a neural network can be be disentangled into tens/hundreds of inference patterns, \emph{i.e.},$v(x_{n+1}|\mathbf{x}) = \sum_{S\subseteq \Omega} I(S|\mathbf{x})$. Here, each inference pattern is formulated as an interaction $I(S|\mathbf{x})$ to represent a non-linear relationship between different input tokens. As Figure~\ref{fig:fig1} shows, we can regard an interaction as a ``phrase'' automatically learned by the LLM. More importantly, \textbf{a set of desirable properties have been proven to demonstrate that these tens/hundreds of interactions can faithfully explain all detailed inference patterns that determine the final scalar output $v(x_{n+1} | \mathbf{x})$ of the LLM} (\emph{e.g.}, sparsity, universal matching, see Theorems~\ref{thm:upperbound} and~\ref{thm:universalMatching})~\cite{li2023does, ren2024where, ren2023defining, zhou2024explaining, shen2023can}.   

\begin{figure}
    \centering
    \includegraphics[clip, trim=0cm 7.7cm 0cm 5.5cm, width=0.95\textwidth]{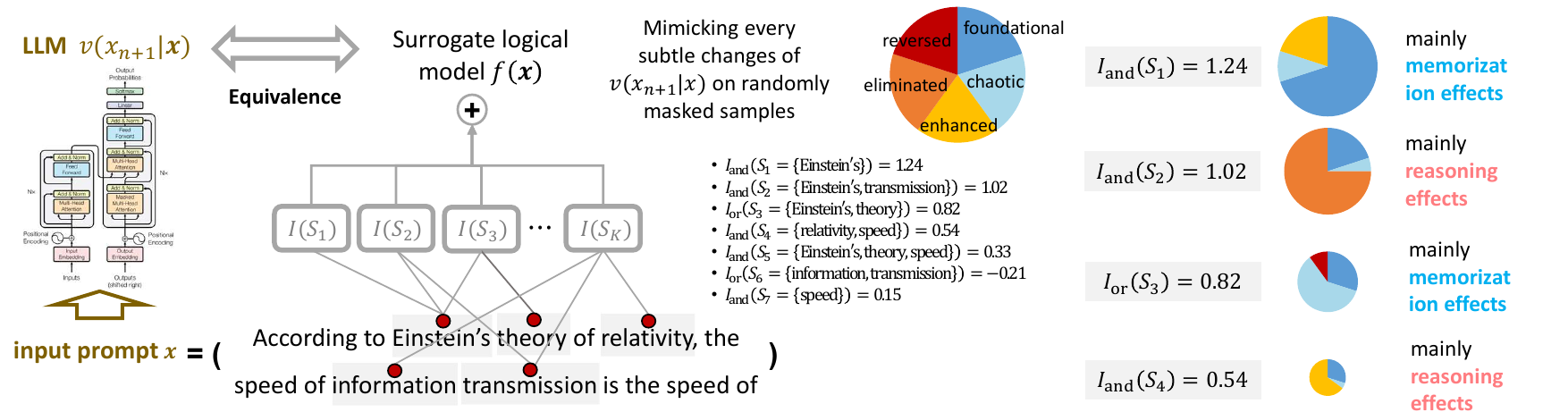}
    \caption{Illustration of interactions encoded by the LLM and the decomposition of their effects into memorization and in-context reasoning effect. The LLM's confidence score can be decomposed into a few interactions, such as $S_2 = \{\textit{Einstein's}, \textit{transmission}\}$. Each interaction has a numerical effect $I(S)$ to the LLM's confidence score. Then, we can decompose the interaction effect into the memorization effects (foundational memorization effect and chaotic memorization effect) and the in-context reasoning effect. The in-context reasoning effect can be further categorized as the enhanced inference pattern, the eliminated inference pattern or the reversed inference pattern.}
    \label{fig:fig1}
\end{figure}

Therefore, we further categorize these interactions, and \textbf{we find that some interactions mainly represent memorization effects, while other interactions correspond to in-context reasoning effects}, as Figure~\ref{fig:fig1} shows. Specifically, we propose two axioms to define exact interaction effects for memorization and interaction effects for in-context reasoning, \emph{i.e.}, the axiom of condition dependence and the axiom of variable independence. Based on the axiomatic system, we classify the interaction effects into the following three primary categories.

\textbf{$\bullet$ Foundational memorization effect.} When the input prompt only contains the question $\mathbf{x}_q$, the interaction effects encoded by the LLM can be considered as memorization effects.

\textbf{$\bullet$ In-context reasoning effect.} When we add the premise $\mathbf{x}_p$ into the question $\mathbf{x}_q$, \emph{i.e.}, $\mathbf{x} = \{\mathbf{x}_p, \mathbf{x}_q\}$, the interaction effect changed by the addition of $\mathbf{x}_p$ are defined as in-context reasoning effects. In particular, these in-context reasoning effects can be further categorized into three types: enhanced inference patterns, eliminated inference patterns, and reversed inference patterns.   

\textbf{$\bullet$ Chaotic memorization effect.} Besides in-context reasoning effects, the inclusion of the premise $\mathbf{x}_p$ also introduces additional effects to the confidence score. Such effects are not directly associated with reasoning, and they are termed chaotic memorization effects.

The above fine-grained categorization of interaction effects provides new insights into the inference of an LLM. Given an input prompt, the LLM usually uses foundational memorization effects of different complexities for inference. The complexity of an interaction effect is quantified as the number of input variables involved in the interaction. In comparison, in-context reasoning mainly has two utilities. First, in-context reasoning effects offset almost all complex memorization effects, \emph{i.e.}, reasoning effects triggered by the premise help the LLM remove incorrect memorization effects. Besides, in-context reasoning also brings in some new relative simple interaction effects. 

In addition, our experiments have shown that the clear disentanglement of memorization effects and in-context reasoning effects enables a straightforward examination of detailed inference patterns encoded by LLMs. We have discovered that the LLM actually uses lots of incorrect interactions for inference, although the final output of the LLM still looks reasonable.

\section{Axiomatic system for memorization effects and reasoning effects} \label{sec:axiomaticsystem_full}
\subsection{Preliminaries: interactions}

\textbf{Defining and decomposing AND-OR interactions.} Given a DNN $v$ and an input sample $\mathbf{x}$ with $n$ variables, indexed by $N = \{1,2,\dots, n\}$, we use $v(\mathbf{x})$ to denote the scalar output of the DNN. We can choose different settings for $v(\mathbf{x})$. For example, When an LLM generates the next token $x_{n+1}$ based on previous prompt $\mathbf{x}$, we can set $v(\mathbf{x}) = \log p(x_{n+1}|\mathbf{x})/(1-p(x_{n+1}|\mathbf{x}))$, where $p(x_{n+1}|\mathbf{x})$ denotes the probability of generating $x_{n+1}$. \citet{chen2024defining, zhou2023explaining} have proven that a DNN can be considered to equivalently encode a set of AND interactions $\Omega_{\text{\rm and}}$ and a set of OR interactions $\Omega_{\text{\rm or}}$. It has also been proven that the output score $v(\mathbf{x})$ can be decomposed into numerical effects of these interactions, as follows.
\begin{equation}
    v(\mathbf{x}) = \sum\nolimits_{S\in \Omega_{\text{\rm and}}}I_{\text{\rm and}}(S|\mathbf{x}) +  \sum\nolimits_{S\in \Omega_{\text{\rm or}}}I_{\text{\rm or}}(S|\mathbf{x}) +v(\mathbf{x}_\emptyset),
\end{equation}
where $v(\mathbf{x}_\emptyset)$ denotes an input sample, in which all input variables are masked\footnote{When masking a word, we replace the embeddings of all related tokens with its baseline embedding~\cite{ren2023can}.\label{footnote:choosewords}}.

Physical meanings of AND-OR interactions can be understood as follows. Theorem~\ref{thm:universalMatching} shows that each AND interaction can be considered as the AND relationship between the input variables in $S$ encoded by the LLM. In LLMs, we can select either words or tokens from the prompt as input variables\footnote{For longer prompts, we may choose key words with semantic significance, excluding insignificant words like articles, prepositions, and conjunctions.}. For example, given an input prompt $\mathbf{x}
=[\textit{The}, \textit{student}, \textit{found}, \textit{the}, \textit{project},  \textit{to}, \textit{be}, \textit{a}, \textit{piece},\textit{of}, \textit{cake}]$, the LLM encodes the non-linear relationship between the words in $S\!=\!\{\textit{piece}, \textit{of}, \textit{cake}\}$ to make a numerical effect towards the meaning of ``easy'' on the network output. Otherwise, Theorem~\ref{thm:universalMatching} will show that the effect $I(S|\mathbf{x}_{\text{\rm masked}})$ will be removed from the output $v(\mathbf{x}_{\text{\rm masked}})$, when we mask any word in $S$ in the input (obtaining $\mathbf{x}_{\text{\rm masked}}$). Similarly, the OR interaction $I_{\text{or}}(S|\mathbf{x})$ corresponds to the OR relationship between the words in $S$. For example, given a prompt $\mathbf{x}=[\textit{The}, \textit{girl}, \textit{is}, \textit{happy}, \textit{and}, \textit{excited}]$, the presence of either word in $S=\{\textit{happy}, \textit{excited}\}$ will contribute to the positive sentiment to the network output. We can compute $I_{\text{and}}(S|\mathbf{x})$ and $I_{\text{or}}(S|\mathbf{x})$ as follows.
\begin{equation}\label{eq:andor}
     I_{\text{and}}(S|\mathbf{x}) = \sum_{T\subseteq S, S\neq \emptyset}(-1)^{|S|-|T|}v_{\text{\rm and}}(\mathbf{x}_T), ~
     I_{\text{or}}(S|\mathbf{x}) = -\sum_{T\subseteq S, S\neq \emptyset}(-1)^{|S|-|T|}v_{\text{\rm or}}(\mathbf{x}_{N\setminus T}), 
\end{equation}
where $\mathbf{x}_T$ denotes a masked sample. Embeddings of words/tokens in $N\setminus T$ are masked in the sample $\mathbf{x}_T$. AND-OR interactions are extracted by decomposing $\forall T\subseteq N, ~v(\mathbf{x}_T)$ into the output component for AND interactions $v_{\text{\rm and}}(\mathbf{x}_T)=0.5v(\mathbf{x}_T) + \theta_T$ and the output component for OR interactions $v_{\text{\rm or}}(\mathbf{x}_T)=0.5v(\mathbf{x}_T) - \theta_T$, subject to $v(\mathbf{x}_T) = v_{\text{\rm and}}(\mathbf{x}_T) + v_{\text{\rm or}}(\mathbf{x}_T) + v(\mathbf{x}_\emptyset)$ . We can prove\footnote{Please see Appendix~\ref{appendix:poofforvcomponent} for details.} that the output component $v_{\text{\rm and}}(\mathbf{x}_T)$ only contains AND interactions, \emph{i.e.}, $v_{\text{\rm and}}(\mathbf{x}_T) = \sum_{S\subseteq T, S\neq \emptyset}I_{\text{and}}(S|\mathbf{x})$, and the component $v_{\text{\rm or}}(\mathbf{x}_T)$ only contains OR interactions, \emph{i.e.}, $v_{\text{\rm or}}(\mathbf{x}_T)=\sum_{S\cap T\neq \emptyset, S\neq \emptyset}I_{\text{or}}(S|\mathbf{x})$. The parameter  $\theta_T$ is learned to obtain the sparsest AND-OR interactions~\cite{li2023defining}, \emph{i.e.}, $\text{min}_{\{\theta_T\}}\|[I_{\text{and}}(S_1|\mathbf{x}), \dots, I_{\text{and}}(S_{2^n}|\mathbf{x})]^\top\|_1 +  \|[I_{\text{or}}(S_1|\mathbf{x}), \dots, I_{\text{or}}(S_{2^n}|\mathbf{x})]^\top\|_1$. Please refer to Appendix~\ref{append:extract} for details.

\textbf{How faithfulness of the interaction-based explanation is guaranteed.} The counterintuitive conclusion that the inference logic of a DNN can be faithfully explained as symbolic patterns (interactions) is guaranteed by the following four properties. 

$\bullet$ \textit{Sparsity property. }The sparsity property means that a DNN for classification\footnote{Generating the next token based on the input prompt is implemented as a classification task.} usually just encodes a small number of salient interactions. 
It has been widely observed~\cite{ren2023defining, li2023does, shen2023can} and mathematically proven by Theorem~\ref{thm:upperbound}~\cite{ren2024where}. \emph{I.e.}, among all $2^n$ possible interactions, only approximately $\mathcal{O}(n^{\kappa}/\tau)$ interactions have salient interaction effects. Please see Appendix~\ref{appendix:sparsity} for more discussions. In fact, this conclusion can be extended to OR interactions, because an OR interaction can be regarded as a specific AND interaction\footnote{Please see Appendix~\ref{appendix:or-interaction} for details.\label{footnote:baseline}}.

$\bullet$ \textit{Universal matching property.} Theorem~\ref{thm:universalMatching}~\cite{zhou2023explaining} has proven that we can use the aforementioned small number of salient interactions to well approximate the output of a DNN on all $2^n$ masked samples.

$\bullet$ \textit{Transferability property.} The transferability of interactions has been widely observed on various DNNs for different tasks~\cite{li2023does}. \emph{I.e.}, different DNNs trained for the same task usually encode similar interactions for inference. Besides, a small set of common interactions are usually frequently extracted from different input samples, and the DNN mainly uses these common interactions for inference.

$\bullet$ \textit{Discrimination property.} It has been discovered that interactions have discrimination power~\cite{li2023does}, that is when an interaction is extracted from different samples, it usually pushes these samples towards the classification of a certain category. 

\begin{theorem}[Sparsity property, proved by~\cite{ren2024where}]\label{thm:upperbound}
    Let us be given an input sample $\mathbf{x}=[x_1,\dots,x_n]$ with $n$ input variables, and a DNN with smooth\footnote{\citet{ren2024where} have proposed three conditions to define a DNN with smooth output score on different masked samples. These conditions can be briefly summarized as follows. (1) the DNN does not encode extremely complex interactions. (2) Let us compute the average classification confidence when we mask different random sets of $m$ input variables (generating $\{\mathbf{x}_T | \vert T\vert = n-m\}$). The average confidence monotonically decreases when more input variables are masked. (3) The decreasing speed of the average confidence is polynomial. Detailed conditions are introduced in Appendix~\ref{appendix:condition}. These conditions are found to be commonly satisfied by most DNNs designed for classification tasks, including LLMs\cite{ren2024where, shen2023can}.}. output score $v(\mathbf{x}_S)$ on different masked samples $\mathbf{x}_S$. $\Gamma = \vert \{ S\subseteq N: I(S|\mathbf{x})\vert>\tau\}\vert$ denotes the number of salient interactions extracted from the input $\mathbf{x}$, where $\tau$ is a threshold for sailent interactions, then $\Gamma$ is on the order of magnitude of $\mathcal{O}(n^{\kappa}/\tau)$.
\end{theorem}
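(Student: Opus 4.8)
The plan is to follow the strategy of \citet{ren2024where} and bound the number of salient AND interactions $\Gamma$ by combining two ingredients: an upper bound on the \emph{total} magnitude $\sum_{S\subseteq N}\vert I(S|\mathbf{x})\vert$ of interactions of each order, and a counting argument that converts such an $\ell_1$-type bound into a bound on the number of interactions exceeding the threshold $\tau$. Concretely, write $\Gamma = \sum_{m=0}^{n}\Gamma_m$ where $\Gamma_m = \vert\{S\subseteq N : \vert S\vert = m,\ \vert I(S|\mathbf{x})\vert > \tau\}\vert$. For each fixed order $m$, if one can show $\sum_{\vert S\vert = m}\vert I(S|\mathbf{x})\vert \le B_m$, then trivially $\Gamma_m \le B_m/\tau$, so $\Gamma \le (\sum_m B_m)/\tau$. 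The whole theorem then reduces to proving that $\sum_m B_m = \mathcal{O}(n^\kappa)$ under the three smoothness conditions.

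First I would recall that $I(S|\mathbf{x}) = \sum_{T\subseteq S}(-1)^{\vert S\vert - \vert T\vert} v(\mathbf{x}_T)$ is a finite difference of the output on masked samples, and re-express the order-$m$ sum of interaction magnitudes in terms of the ``confidence decay'' behavior of $v$. The key structural fact (Condition 2 and Condition 3 in the footnote) is that the averaged output $u(m) := \mathbb{E}_{\vert T\vert = n-m}[v(\mathbf{x}_T)]$ decreases monotonically and at a polynomial rate as more variables are masked; Condition 1 additionally forbids extremely high-order interactions, which controls the tail of the order distribution. I would use these to bound, order by order, the aggregate interaction strength: essentially the $m$-th order finite differences of a polynomially-decaying, monotone sequence are summable with a bound that grows only polynomially in $n$ (degree $\kappa$), rather than exponentially as the naive $\binom{n}{m}$ count would suggest. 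This is exactly the step where the ``smoothness'' hypotheses do the real work: without them $\sum_{\vert S\vert=m}\vert I(S|\mathbf{x})\vert$ could be as large as $2^n$.

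The main obstacle I anticipate is making the order-by-order $\ell_1$ bound rigorous: one must translate the three qualitative conditions on the \emph{average} masked output into a quantitative bound on $\sum_{\vert S\vert = m}\vert I(S|\mathbf{x})\vert$, and this requires controlling not just the mean $u(m)$ but enough of the higher moments / combinatorial structure of $\{v(\mathbf{x}_T)\}$ so that the signed finite differences do not all align in magnitude. The technical heart is likely a summation-by-parts (Abel summation) manipulation on the finite-difference operator combined with the polynomial-decay assumption, plus a careful treatment of the low-order terms ($m$ small, where $\binom{n}{m}$ is itself polynomial and the bound is easy) separately from the high-order terms (where Condition 1 on bounded interaction complexity is what saves us). Once the per-order bound $B_m$ with $\sum_m B_m = \mathcal{O}(n^\kappa)$ is in hand, dividing by $\tau$ and summing over $m$ finishes the proof; I would then also remark, as the excerpt does, that the bound transfers verbatim to OR interactions since an OR interaction can be rewritten as an AND interaction under a complementary baseline (Appendix on OR interactions).
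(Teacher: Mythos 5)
This paper does not contain a proof of Theorem~\ref{thm:upperbound}: the theorem is explicitly stated as ``proved by~\cite{ren2024where}'' and imported as a black-box result, and Appendix~\ref{appendix:condition} merely restates the three ``smoothness'' conditions without deriving the $\mathcal{O}(n^\kappa/\tau)$ bound from them. So there is no in-paper proof against which to compare your sketch; you would need to consult~\cite{ren2024where} itself for the actual argument.

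That said, taking your sketch on its own terms: the global structure ($\Gamma = \sum_m \Gamma_m$, bound each $\Gamma_m$ via an order-$m$ $\ell_1$ budget $B_m$, and conclude $\Gamma \le (\sum_m B_m)/\tau$) is the standard and correct way to convert an $\ell_1$-type control into a threshold count, so the scaffolding is sound. However, the step you yourself flag as ``the main obstacle'' — establishing $\sum_{|S|=m}|I(S|\mathbf{x})| \le B_m$ with $\sum_m B_m = \mathcal{O}(n^\kappa)$ from only the three qualitative conditions — is precisely where the entire content of the theorem lives, and your proposal does not actually supply it. In particular, Conditions 2 and 3 constrain only the \emph{average} $u(m)=\mathbb{E}_{|T|=n-m}[v(\mathbf{x}_T)]$, not the dispersion of $v(\mathbf{x}_T)$ around that mean, and the signed finite differences $I(S|\mathbf{x}) = \sum_{T\subseteq S}(-1)^{|S|-|T|}v(\mathbf{x}_T)$ depend on that dispersion in an essential way: two configurations $\{v(\mathbf{x}_T)\}$ can share every level-average $u(m)$ yet have wildly different $\ell_1$ interaction mass. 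An Abel-summation manipulation on the averages $u(m)$ alone therefore cannot close the gap. The missing idea is some mechanism (as in~\cite{ren2024where}) that leverages Condition 1 (a hard cutoff order $M$ above which interactions vanish) to restrict the effective support to $\binom{n}{\le M}$ subsets and then ties the per-order interaction mass to the polynomially-decaying increments $u(m)-u(m+1)$, rather than to $u(m)$ itself. As written, your proposal is a plausible plan but not a proof; the decisive lemma is conjectured, not established.
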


\begin{theorem}[Universal matching property, proved by~\cite{zhou2023explaining}]\label{thm:universalMatching}
Given an input sample $\mathbf{x}$, let us construct the following surrogate logical model $f(\cdot)$ to use AND-OR interactions extracted from the DNN $v(\cdot)$ on the sample $\mathbf{x}$ for inference. Then, the surrogate logical model $f(\cdot)$ can well mimic the DNN's output on $2^n$ masked samples.
\begin{align}
\forall T\subseteq N, f(\mathbf{x}_T) = &v(\mathbf{x}_T)\nonumber,\\
f(\mathbf{x}_T) = &v(\mathbf{x}_\emptyset) +\!\!\!\sum\nolimits_{S\subseteq N}I_{\text{\rm and}}(S|\mathbf{x}) \cdot \mathbbm{1}\left(\!\!\!\!\begin{array}{cc}
    {\scriptstyle \mathbf{x}_T\text{ \rm{triggers}}}\\
      {\scriptstyle \text{\rm {AND relation }}} S
\end{array}\!\!\!\!\right)
+\!\!\!\sum\nolimits_{S\subseteq N}I_{\text{\rm or}}(S|\mathbf{x}) \cdot \mathbbm{1}\left(\!\!\!\!\begin{array}{cc}
    {\scriptstyle \mathbf{x}_T\text{  \rm{triggers}}}\\
      {\scriptstyle \text{\rm {OR relation }}S}
\end{array}\!\!\!\!\right) \nonumber \\
= &v(\mathbf{x}_\emptyset) + \sum\nolimits_{0\neq S\subseteq T}I_{\text{\rm and}}(S|\mathbf{x})
+\sum\nolimits_{S\in \{S:S\cap T\neq \emptyset\}}I_{\text{\rm or}}(S|\mathbf{x})\nonumber \\
\approx  & v(\mathbf{x}_\emptyset) + \sum\nolimits_{S\in \Omega_{\text{\rm and}}: \emptyset \neq S\subseteq T}I_{\text{\rm and}}(S|\mathbf{x}) + \sum\nolimits_{S\in \Omega_{\text{\rm or}}: S\cap T \neq \emptyset}I_{\text{\rm or}}(S|\mathbf{x}),
\end{align}
where, $\Omega_{\text{\rm and}}\!\! =\!\!\{ S\!\subseteq \!N:|I_{\text{\rm and}}(S|\mathbf{x})|>\tau\}$, $\Omega_{\text{\rm or}} \!\!=\!\!\{S \subseteq N: |I_{\text{\rm or}}(S|\mathbf{x})|>\tau\}$ and $\tau$ is a small threshold.
\end{theorem}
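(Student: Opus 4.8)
The plan is to split the statement into an \emph{exact} reconstruction claim and an \emph{approximate} one, and to prove them in that order, since the approximate line of the display is just the exact line with two residual sums discarded. The exact part, $f(\mathbf{x}_T) = v(\mathbf{x}_T)$ for every $T\subseteq N$, will be a pair of M\"obius (Harsanyi) inversions on the subset lattice together with a complementation trick for the OR terms; the approximate part, replacing $S\subseteq N$ by $S\in\Omega_{\text{and}}$ and $S\in\Omega_{\text{or}}$, will be an error bound that has to lean on the decay of interaction strengths underlying Theorem~\ref{thm:upperbound}.

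For the exact part I would first treat the AND component. The definition $I_{\text{and}}(S|\mathbf{x}) = \sum_{\emptyset\neq T\subseteq S}(-1)^{|S|-|T|}v_{\text{and}}(\mathbf{x}_T)$ is precisely the M\"obius coefficient (Harsanyi dividend) of the set function $T\mapsto v_{\text{and}}(\mathbf{x}_T)$ restricted to nonempty $S$, so M\"obius inversion gives $v_{\text{and}}(\mathbf{x}_T) = v_{\text{and}}(\mathbf{x}_\emptyset) + \sum_{\emptyset\neq S\subseteq T}I_{\text{and}}(S|\mathbf{x})$, which is exactly the reading ``$\mathbf{x}_T$ triggers the AND relation $S$ iff $S\subseteq T$'' of the first indicator. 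For the OR component I would introduce the auxiliary function $g(T) := v_{\text{or}}(\mathbf{x}_{N\setminus T})$, observe that $-I_{\text{or}}(S|\mathbf{x})$ is the Harsanyi dividend of $g$, invert to get $g(T) = g(\emptyset) - \sum_{\emptyset\neq S\subseteq T}I_{\text{or}}(S|\mathbf{x})$, and substitute $T\mapsto N\setminus T$; since $\{S:\emptyset\neq S\subseteq N\setminus T\} = \{S:S\neq\emptyset,\ S\cap T=\emptyset\}$, this rearranges to $v_{\text{or}}(\mathbf{x}_T) = v_{\text{or}}(\mathbf{x}_\emptyset) + \sum_{S\cap T\neq\emptyset}I_{\text{or}}(S|\mathbf{x})$, i.e. the reading ``the OR relation $S$ is triggered iff $S\cap T\neq\emptyset$''. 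Adding $v_{\text{and}}(\mathbf{x}_T)+v_{\text{or}}(\mathbf{x}_T)+v(\mathbf{x}_\emptyset)$ and using the normalization $v_{\text{and}}(\mathbf{x}_\emptyset)+v_{\text{or}}(\mathbf{x}_\emptyset)=0$ forced by the constraint $v(\mathbf{x}_T)=v_{\text{and}}(\mathbf{x}_T)+v_{\text{or}}(\mathbf{x}_T)+v(\mathbf{x}_\emptyset)$ evaluated at $T=\emptyset$, the two constants collapse into a single $v(\mathbf{x}_\emptyset)$, and we recover $f(\mathbf{x}_T)=v(\mathbf{x}_T)$ identically in $T$.

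For the approximate part, I would write the residual at $T$ as $\varepsilon(T) = \sum_{\emptyset\neq S\subseteq T,\,S\notin\Omega_{\text{and}}}I_{\text{and}}(S|\mathbf{x}) + \sum_{S\cap T\neq\emptyset,\,S\notin\Omega_{\text{or}}}I_{\text{or}}(S|\mathbf{x})$ and bound $|\varepsilon(T)|$ uniformly in $T$ by the total tail mass $\sum_{S\notin\Omega_{\text{and}}}|I_{\text{and}}(S|\mathbf{x})| + \sum_{S\notin\Omega_{\text{or}}}|I_{\text{or}}(S|\mathbf{x})|$. The naive estimate $|\varepsilon(T)|\le 2^n\tau$ is useless, so the real work is to show that this tail mass is genuinely small: here I would reuse the argument behind Theorem~\ref{thm:upperbound}, whose three smoothness conditions give not merely $\mathcal{O}(n^\kappa/\tau)$ interactions above $\tau$ but a fast enough decay of the sorted interaction strengths that the sub-$\tau$ tail sums to a small quantity; the OR case is folded into the AND case through the ``OR-as-AND'' reparametrization (footnoted in the paper) applied to $g$, and the $\ell_1$-minimality of the chosen $\{\theta_T\}$ only tightens the bound. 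I expect this tail-mass estimate to be the main obstacle: the M\"obius algebra above is pure bookkeeping, whereas this is the only place where the structural assumptions on $v$ actually do work, and it is also the only place where the statement is genuinely ``approximate'' rather than exact.

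Finally I would add a one-line remark that the result is for a fixed input $\mathbf{x}$ and a fixed extraction $\{\theta_T\}$, that ``triggers'' is used purely as shorthand for the two events $\mathbbm{1}(S\subseteq T)$ and $\mathbbm{1}(S\cap T\neq\emptyset)$, and that nothing in the argument changes if $v(\mathbf{x})$ is instantiated as the log-odds $\log\frac{p(x_{n+1}|\mathbf{x})}{1-p(x_{n+1}|\mathbf{x})}$ used for next-token generation.
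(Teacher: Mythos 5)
Your exact-matching argument is correct and is, in substance, the same calculation the paper relegates to Appendix~\ref{appendix:poofforvcomponent}: the identity $v_{\text{and}}(\mathbf{x}_T)=\sum_{S\subseteq T}I_{\text{and}}(S|\mathbf{x})$ is proved there by interchanging sums and letting the binomial alternating sums cancel, which is precisely the M\"obius inversion you invoke, just written out by hand; and the OR identity $v_{\text{or}}(\mathbf{x}_T)=\sum_{S\cap T\neq\emptyset}I_{\text{or}}(S|\mathbf{x})$ is likewise proved by a direct (messier) inclusion--exclusion, whereas your complementation $g(T)=v_{\text{or}}(\mathbf{x}_{N\setminus T})$ reduces it cleanly to the AND case --- a nicer organization of the same content. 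One small imprecision: ``substitute $T\mapsto N\setminus T$'' alone gives $v_{\text{or}}(\mathbf{x}_T)=g(\emptyset)-\sum_{\emptyset\neq S\subseteq N\setminus T}I_{\text{or}}(S|\mathbf{x})$ with $g(\emptyset)=v_{\text{or}}(\mathbf{x}_N)$, not $v_{\text{or}}(\mathbf{x}_\emptyset)$; you need the additional evaluation at $T=N$ (or $T=\emptyset$) to trade the constant and convert the complementary sum into $\sum_{S\cap T\neq\emptyset}$. The conclusion is right, but the phrase ``rearranges to'' hides that step.

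On the approximate line: you are right to flag the tail-mass bound as the only genuinely analytic step, and also right that it is not a one-liner. Be aware, though, that the paper does not actually prove it either --- the ``$\approx$'' in Theorem~\ref{thm:universalMatching} is left informal (the theorem is credited to prior work, and the only quantitative evidence in the paper is the empirical matching-error experiment in Appendix~\ref{appendix:faithfulness}). Theorem~\ref{thm:upperbound} as stated bounds only the \emph{count} $\Gamma=\mathcal{O}(n^\kappa/\tau)$ of above-threshold interactions, which by itself gives you nothing about $\sum_{S\notin\Omega}|I(S|\mathbf{x})|$; to control the tail mass you would indeed need the stronger decay estimates inside the cited proof, not just the stated corollary. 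So your proposal is slightly more ambitious than the paper on this point, and the honest way to close it within the paper's own framework is to state the approximation as a modelling assumption supported by Theorem~\ref{thm:upperbound} and the empirical check, rather than as a theorem.
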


\subsection{Context-agnostic memorization vs. in-context reasoning}\label{sec:mvsr}
Let us be given the prompt $\mathbf{x}=[\mathbf{x}_p^\top, \mathbf{x}_q^\top]^\top$, which consists of a premise $\mathbf{x}_p=\{$\textit{Caren works at school as a teacher.}$\}$, and a question $\mathbf{x}_q=\{$\textit{Emily is the colleague of Caren, Emily works as a}$\}$. Let us assume that the question can only be answered by incorporating the in-context reasoning with the premise $\mathbf{x}_p$. Thus, the answer generated without the premise $v(x_{n+1}|\mathbf{x}_q)$ is considered to exclusively use the context-agnostic memory, although the memorized knowledge may contain both meaningful knowledge and meaningless/over-fitted patterns. In comparison, given both the premise and question, the language generation $v(x_{n+1}|\mathbf{x}=[\mathbf{x}_p^\top, \mathbf{x}_q^\top]^\top)$ is believed to use both memorization and reasoning for inference.

Thus, in this paper, we hope to go beyond simply decomposing the output score $v(x_{n+1}|\mathbf{x}=[\mathbf{x}_p^\top, \mathbf{x}_q^\top]^\top)$ into the score component for context-agnostic memorization and the score component for in-context reasoning. Instead, in Section~\ref{sec:axiomaticsystem}, we will propose two axioms to formulate and extract tens/hundreds of explicit interactions corresponding to context-agnostic memorization effect and interactions corresponding to in-context reasoning effects. Specifically, we categorize the memorization effects into (1) the foundational memorization effect and (2) the chaotic memorization effect, and to categorize the reasoning effects into (1) the enhanced inference pattern, (2) the eliminated inference pattern, and (3) the reversed inference pattern.

\subsection{Axiomatic system}\label{sec:axiomaticsystem}

We hope to precisely define and quantify the effects of context-agnostic memorization and in-context reasoning in the LLM's inference logic. Specifically, given the input prompt $\mathbf{x}=[\mathbf{x}_p^\top, \mathbf{x}_q^\top]^\top$, let us extract a set of salient AND interactions $\Omega_{\text{\rm and}}(\mathbf{x})=\{S\subseteq N: \vert I_{\text{\rm and}}(S|\mathbf{x})\vert >\tau\}$ and a set of salient OR interactions $\Omega_{\text{\rm or}}(\mathbf{x}) =\{S\subseteq N: \vert I_{\text{\rm or}}(S|\mathbf{x})\vert >\tau\}$. Then, $I_{\text{\rm and}}(S|\mathbf{x}_q)$, $\Omega_{\text{\rm and}}(\mathbf{x}_q)$, $I_{\text{\rm or}}(S|\mathbf{x}_q)$ and $\Omega_{\text{\rm or}}(\mathbf{x}_q)$ correspond to terms extracted when we only input $\mathbf{x}_q$ into the LLM.

\begin{figure}[t]
  \begin{minipage}[c]{0.4\textwidth}
    \includegraphics[clip, trim=2cm 12.5cm 16cm 4cm, width=0.8\textwidth]{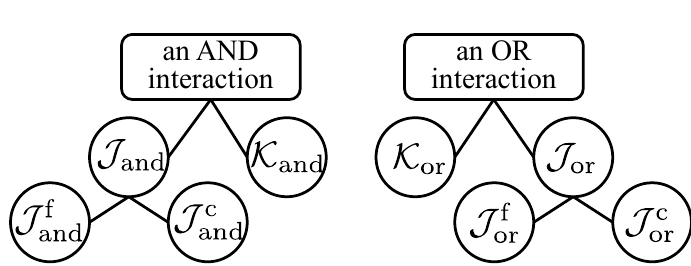}
  \end{minipage}
  \begin{minipage}[c]{0.6\textwidth}
    \caption{
       Illustration of the decomposition of an interaction into the foundational memorization interaction ($\mathcal{J}^{\text{f}}_{\text{and}}$, $\mathcal{J}^{\text{f}}_{\text{or}}$), the chaotic memorization interaction ($\mathcal{J}^{\text{c}}_{\text{and}}$, $\mathcal{J}^{\text{c}}_{\text{or}}$), and the in-context reasoning interaction ($\mathcal{K}_{\text{and}}$, $\mathcal{K}_{\text{or}}$).
    }  \label{fig:fig2}
  \end{minipage}
\end{figure}

In this way, given an input prompt $\mathbf{x}=[\mathbf{x}_p^\top, \mathbf{x}_q^\top]^\top$, the goal of this study is to define and decompose all AND-OR interactions ($I_{\text{\rm and}}(S|\mathbf{x})$, $I_{\text{\rm or}}(S|\mathbf{x})$) as the sum of context-agnostic memorization AND-OR interactions ($\mathcal{J}_{\text{\rm and}}(S|\mathbf{x})$, $\mathcal{J}_{\text{\rm or}}(S|\mathbf{x})$) and in-context reasoning AND-OR interactions ($\mathcal{K}_{\text{\rm and}}(S|\mathbf{x})$, $\mathcal{K}_{\text{\rm or}}(S|\mathbf{x})$), as shown in Figure~\ref{fig:fig2}. Thus, the core task in this subsection is to propose an axiomatic system to define and quantify $\mathcal{J}_{\text{\rm and}}(S|\mathbf{x})$, $\mathcal{J}_{\text{\rm or}}(S|\mathbf{x})$, $\mathcal{K}_{\text{\rm and}}(S|\mathbf{x})$ and $\mathcal{K}_{\text{\rm or}}(S|\mathbf{x})$. 
\begin{subequations}
    \begin{gather}
            I_{\text{\rm and}}(S|\mathbf{x}) \overset{\mathrm{decompose}}{=} \mathcal{J}_{\text{\rm and}}(S|\mathbf{x}) + \mathcal{K}_{\text{\rm and}}(S|\mathbf{x}), ~~~~~~ I_{\text{\rm or}}(S|\mathbf{x}) \overset{\mathrm{decompose}}{=} \mathcal{J}_{\text{\rm or}}(S|\mathbf{x}) + \mathcal{K}_{\text{\rm or}}(S|\mathbf{x}), \label{eq:IJK}\\
            \Rightarrow v(x_{n+1}|\mathbf{x}) 
            =  \!\!\!\!\underbrace{\!\!\!\!\!\sum_{S\in \Omega_{\text{\rm and}}(\mathbf{x})}\!\!\!\!\!\mathcal{J}_{\text{\rm and}}(S|\mathbf{x})  
            +\!\!\!\!\!\!\!\sum_{S\in \Omega_{\text{\rm or}}(\mathbf{x})}\!\!\!\!\!\mathcal{J}_{\text{\rm or}}(S|\mathbf{x})}_\text{inference effects based on memorization}
            +\!\!\!\! \underbrace{\!\!\!\!\!\sum_{S\in \Omega_{\text{\rm and}}(\mathbf{x})}\!\!\!\!\!\mathcal{K}_{\text{\rm and}}(S|\mathbf{x}) 
            + \!\!\!\!\!\!\sum_{S\in \Omega_{\text{\rm or}}(\mathbf{x})}\!\!\!\!\!\mathcal{K}_{\text{\rm or}}(S|\mathbf{x})} _\text{inference effects based on reasoning}+v(x_{n+1}|\mathbf{x}_\emptyset). \label{vJK}
    \end{gather}
\end{subequations}

\begin{axiom}[Condition dependence]\label{axiom1}
    In-context reasoning effects are supposed to exclusively affect the confidence score with the premise, \emph{i.e.}, if $\mathbf{x}=\mathbf{x}_q$, then $\mathcal{K}_{\text{\rm and}}(S|\mathbf{x}=\mathbf{x}_q)=0$ and $\mathcal{K}_{\text{\rm or}}(S|\mathbf{x}=\mathbf{x}_q)$=0.
\end{axiom}

The axiom requires that the interaction effect for in-context reasoning must depend on the contextual premise $\mathbf{x}_p$. \emph{I.e.}, if we remove the premise $\mathbf{x}_p$, then all in-context reasoning effects ($\mathcal{K}_{\text{\rm and}}(S|\mathbf{x}_q)$, $\mathcal{K}_{\text{\rm or}}(S|\mathbf{x}_q)$) are supposed to be eliminated, and the remaining inference effects correspond to context-agnostic memorization ($\mathcal{J}_{\text{\rm and}}(S|\mathbf{x}_q)$, $\mathcal{J}_{\text{\rm or}}(S|\mathbf{x}_q)$).

\textbf{Foundational memorization vs. chaotic memorization.} In addition, the memorization effects within $\mathcal{J}_{\text{\rm and}}(S|\mathbf{x})$ and $\mathcal{J}_{\text{\rm or}}(S|\mathbf{x})$ can be further divided into two types, \emph{i.e.}, decomposing $\mathcal{J}_{\text{\rm and}}(S|\mathbf{x}) = \mathcal{J}^{\text{\rm f}}_{\text{\rm and}}(S|\mathbf{x}) + \mathcal{J}^{\text{\rm c}}_{\text{\rm and}}(S|\mathbf{x})$ and $\mathcal{J}_{\text{\rm or}}(S|\mathbf{x}) = \mathcal{J}^{\text{\rm f}}_{\text{\rm or}}(S|\mathbf{x}) + \mathcal{J}^{\text{\rm c}}_{\text{\rm or}}(S|\mathbf{x})$. The first type of memorization is the foundational memorization, which represents the retention of common facts from the training corpus, denoted by $\mathcal{J}^{\text{\rm f}}_{\text{\rm and}}(S|\mathbf{x})$ and $\mathcal{J}^{\text{\rm f}}_{\text{\rm or}}(S|\mathbf{x})$. For example, given the input $\mathbf{x}=\{$\textit{The sun rises from the}$\}$, the LLM uses such memorization to predict the next word \textit{east}.

The second type of the memorization effect is chaotic memorization. The chaotic memorization effect is referred to as the logically chaotic patterns associated with the premise and the question. For example, let us be given two logically equivalent prompts $\mathbf{x}=\{$\textit{John is the father of Tina, Tina is the daughter of}$\}$ and $\mathbf{x}'=\{$\textit{\underline{John likes singing.} John is the father of Tina, Tina is the daughter of}$\}$. The only difference is \textit{John likes singing}, which is supposed not to affect the reasoning logic to predict the next word \textit{John}. Thus, the chaotic memorization effect represents the interaction effects that are affected by the logical-irrelevant textual changes in two logically equivalent prompts. In this way, we propose the following axiom.

\begin{axiom}[Variable independence]\label{axiom2}
Given a set of logically equivalent prompts, $\mathbb{X} = \{\mathbf{x}^1, \mathbf{x}^2,\dots, \mathbf{x}^k\}$, let $v(x_{n+1}|\mathbb{X}) = \mathbb{E}_i[v(x_{n+1}|\mathbf{x}^i) ]$ represent the average confidence score. Then, the 
 change of confidence scores \emph{w.r.t.} $v(x_{n+1}|\mathbf{x})$ and $v(x_{n+1}|\mathbb{X})$ are supposed to be exclusively caused by chaotic memorization effects as follows,
\begin{equation}\label{eq:axiom2}
        v(x_{n+1}|\mathbf{x}) - v(x_{n+1}|\mathbb{X})=\sum\nolimits_{S\in \Omega_{\text{\rm and}}(\mathbf{x})}\mathcal{J}^{\text{\rm c}}_{\text{\rm and}}(S|\mathbf{x})
        +\sum\nolimits_{S\in \Omega_{\text{\rm or}}(\mathbf{x})}\mathcal{J}^{\text{\rm c}}_{\text{\rm or}}(S|\mathbf{x}). 
\end{equation}
All logically equivalent prompts in $\mathbb{X}$ have the same foundational memorization effects and the in-context reasoning effects, \emph{i.e.}, 
$\forall \mathbf{x}^i, \mathbf{x}^j$ in $\mathbb{X}, \mathcal{J}^{\text{\rm f}}_{\text{\rm and}}(S|\mathbf{x}^i) = \mathcal{J}^{\text{\rm f}}_{\text{\rm and}}(S|\mathbf{x}^j), \mathcal{K}_{\text{\rm and}}(S|\mathbf{x}^i) = \mathcal{K}_{\text{\rm and}}(S|\mathbf{x}^j)$ and 
$\mathcal{J}^{\text{\rm f}}_{\text{\rm or}}(S|\mathbf{x}^i) = \mathcal{J}^{\text{\rm f}}_{\text{\rm or}}(S|\mathbf{x}^j), \mathcal{K}_{\text{\rm or}}(S|\mathbf{x}^i) = \mathcal{K}_{\text{\rm or}}(S|\mathbf{x}^j)$.
\end{axiom}

In LLMs, these chaotic memorization effects often constitute a relatively small proportion of all effects. In order to realize the axiom of variable independence, we construct the following three types of logically equivalent prompts. Let us take the premise $\mathbf{x}_p=\{$\textit{Caren works at school as a teacher.}$\}$, and the question $\mathbf{x}_q=\{$\textit{Emily is the colleague of Caren, Emily works as a}$\}$ as an example. The corresponding logically equivalent prompts are shown in both Table~\ref{tab:logical_equivalent} and Appendix~\ref{appendix:prompts}.

$\bullet$ \textit{Removing and adding irrelevant background.} The reasoning logic should be insensitive to irrelevant background in the premise $\mathbf{x}_p$, such as additional noise details. For instance, let $\mathbf{x}_p'=\{$\textit{Caren works as a teacher and Tom works as a doctor.}$\}$. The presence of the information \textit{``and Tom works as a doctor''} should not impact the reasoning process. Therefore, the two prompts $\mathbf{x}=\{$\textit{Caren works as a teacher. Emily is the colleague of Caren, Emily works as a}$\}$ and $\mathbf{x}'=\{$\textit{Caren works as a teacher \underline{and Tom works as a doctor}. Emily is the colleague of Caren, Emily works as a}$\}$ are logically equivalent.

$\bullet$ \textit{Semantic paraphrasing.} This involves rephrasing the premise $\mathbf{x}_p$ using synonymous or semantically equivalent expressions. For example, $\mathbf{x}''=\{$\textit{\underline{Caren teaches students at school}. Emily is the colleague of Caren, Emily works as a}$\}$ and $\mathbf{x}$ are logically equivalent. 

$\bullet$ \textit{Renaming.} The name of entities involved in the reasoning process does not directly affect the reasoning logic. Therefore, when we simply rename these entities in both the premise $\mathbf{x}_p$ and the question $\mathbf{x}_q$, \emph{e.g.}, replacing \textit{``Caren''} in $\mathbf{x}$ with any other names such as \textit{``Tina''} to obtain $\mathbf{x}''' = \{$\textit{\underline{Tina} works at school as a teacher. Emily is the colleague of \underline{Tina}, Emily works as a}$\}$, the two prompts are logically equivalent.

\begin{table}[t]
    \centering
 \resizebox{\linewidth}{!}{\footnotesize
    \begin{tabular}{l|l}
    \toprule
        Type &  Logically equivalent prompt \\
    \midrule
    \midrule
        Original prompt & \textcolor{orange}{Caren works as a teacher.} \textcolor{teal}{Emily is the colleague of Caren, Emily works as a}\\
        \midrule
         \multirow{2}{*}{\makecell{Removing and adding \\irrelevant background}} & a. \textcolor{orange}{Caren works as a teacher \textbf{and Tom works as a doctor}.} \textcolor{teal}{Emily is the colleague of Caren, Emily works as a}\\
         & b. \textcolor{orange}{Caren works as a teacher \textbf{and she likes singing}}. \textcolor{teal}{Emily is the colleague of Caren, Emily works as a} \\
         \midrule
         \multirow{2}{*}{Semantic paraphrasing} & a. \textcolor{orange}{\textbf{Caren teaches students at school}.} \textcolor{teal}{Emily is the colleague of Caren, Emily works as a} \\
         &b. \textcolor{orange}{\textbf{The school hired Caren as a teacher}.} \textcolor{teal}{Emily is the colleague of Caren, Emily works as a} \\
         \midrule
         \multirow{2}{*}{Renaming} &a. \textcolor{orange}{\textbf{Tina} works at school as a teacher.} \textcolor{teal}{Emily is the colleague of \textbf{Tina}, Emily works as a} \\
         &b. \textcolor{orange}{\textbf{Anna} works at school as a teacher.} \textcolor{teal}{Emily is the colleague of \textbf{Anna}, Emily works as a} \\
    \bottomrule 
    \end{tabular}}
    \vspace{1pt}
    \caption{Examples of three types of logically equivalent prompts.}
    \label{tab:logical_equivalent}
\end{table}

\subsection{Fine-grained analysis of memorization effects and in-context reasoning effects}\label{sec:fine-grained}
Since the above two axioms have defined properties that the context-agnostic memorization effect and the in-context reasoning effect are supposed to satisfy, in this subsection, we will follow the two axioms to quantify the exact memorization effects and reasoning effects. Then, we further propose a series of new metrics to categorize these effects into more fine-grained types.

\textbf{Quantification of the memorization effect ($\mathcal{J}_{\text{\rm and}}(S|\mathbf{x})$, $\mathcal{J}_{\text{\rm or}}(S|\mathbf{x})$) and the reasoning effect ($\mathcal{K}_{\text{\rm and}}(S|\mathbf{x})$, $\mathcal{K}_{\text{\rm or}}(S|\mathbf{x})$).} According to Axioms~\ref{axiom1} and~\ref{axiom2}, we can quantify the foundational memorization effect ($\mathcal{J}^{\text{\rm f}}_{\text{\rm and}}(S|\mathbf{x})$, $\mathcal{J}^{\text{\rm f}}_{\text{\rm or}}(S|\mathbf{x})$), the chaotic memorization effect ($\mathcal{J}^{\text{\rm c}}_{\text{\rm and}}(S|\mathbf{x})$, $\mathcal{J}^\text{\rm c}_{\text{\rm or}}(S|\mathbf{x})$) and the reasoning effect ($\mathcal{K}_{\text{\rm and}}(S|\mathbf{x})$, $\mathcal{K}_{\text{\rm or}}(S|\mathbf{x})$) as follows.
\begin{subequations}
    \begin{align}
        \mathcal{J}^{\text{\rm f}}_{\text{\rm and}}(S|\mathbf{x}) &= I_{\text{\rm and}}(S|\mathbf{x}_q), &\quad \mathcal{J}^{\text{\rm f}}_{\text{\rm or}}(S|\mathbf{x}) &= I_{\text{\rm or}}(S|\mathbf{x}_q), \label{eq:J_f}\\
        \mathcal{J}^{\text{\rm c}}_{\text{\rm and}}(S|\mathbf{x}) &= I_{\text{\rm and}}(S|\mathbf{x}) - I_{\text{\rm and}}(S|\mathbb{X}), &
        \quad \mathcal{J}^{\text{\rm c}}_{\text{\rm or}}(S|\mathbf{x}) &= I_{\text{\rm or}}(S|\mathbf{x})-I_{\text{\rm or}}(S|\mathbb{X}), \label{eq:J_c}\\
        \mathcal{K}_{\text{\rm and}}(S|\mathbf{x}) &= I_{\text{\rm and}}(S|\mathbb{X}) - I_{\text{\rm and}}(S|\mathbf{x}_q), &
        \quad \mathcal{K}_{\text{\rm or}}(S|\mathbf{x}) &= I_{\text{\rm or}}(S|\mathbb{X}) - I_{\text{\rm or}}(S|\mathbf{x}_q), \label{eq:K}
    \end{align}
\end{subequations}
where $I_{\text{\rm and}}(S|\mathbb{X})$ and $I_{\text{\rm or}}(S|\mathbb{X})$ is computed based on $v(x_{n+1}|\mathbb{X}_T) = \mathbb{E}_i[v(x_{n+1}|\mathbf{x}^i_T) ]$ to represent the average AND-OR interactions. Here, $\mathbb{X}_T=\{\mathbf{x}^1_T, \dots, \mathbf{x}^k_T\}$ represents a set of $k$ masked logically equivalent prompts. Notably, the $n$ input words annotated in the original prompt should also be found and annotated in the $k-1$ logically equivalent prompts to enable the computation of $v(x_{n+1} |\mathbf{x}_S)$\footnote{For more detailed information, please refer to Appendix~\ref{appendix:experimentaldetails}}. In this way, the confidence score $ v(x_{n+1}|\mathbf{x})$ can be represented as the sum of the above three types of interactions.
{\small
\begin{equation}\label{eq:eq7}
    v(x_{n+1}|\mathbf{x})\!= \!\!\!\!\!\!\!\!\sum_{S\in \Omega_{\text{\rm and}}(\mathbf{x})}\!\!\!\!\!\!\!\left(\mathcal{J}^{\text{\rm f}}_{\text{\rm and}}(S|\mathbf{x})\!+\!\mathcal{J}^{\text{\rm c}}_{\text{\rm and}}(S|\mathbf{x})\!+\!\mathcal{K}_{\text{\rm and}}(S|\mathbf{x})\right) 
    \!+\!\!\!\!\!\!\!\sum_{S\in \Omega_{\text{\rm or}}(\mathbf{x})}\!\!\!\!\!\!\!\left(\mathcal{J}^{\text{\rm f}}_{\text{\rm or}}(S|\mathbf{x}) \!+\!\mathcal{J}^{\text{\rm c}}_{\text{\rm or}}(S|\mathbf{x}) \!+\! \mathcal{K}_{\text{\rm or}}(S|\mathbf{x})\right) + v(x_{n+1}|\mathbf{x}_\emptyset).
\end{equation}}

\textbf{Theoretical guarantee of the faithfulness: The above decomposed effects still satisfy the sparsity property and the universal matching property.} $\mathcal{J}^{\text{f}}_{\text{\rm and}}(S|\mathbf{x})$, $\mathcal{J}^{\text{f}}_{\text{\rm or}}(S|\mathbf{x})$, $\mathcal{J}^{\text{c}}_{\text{\rm and}}(S|\mathbf{x})$, $\mathcal{J}^{\text{c}}_{\text{\rm or}}(S|\mathbf{x})$, $\mathcal{K}_{\text{\rm and}}(S|\mathbf{x})$ and $\mathcal{K}_{\text{\rm or}}(S|\mathbf{x})$ are all sparse due to the sparsity of $I_{\text{\rm and}}(S|\mathbf{x})$ and $I_{\text{\rm or}}(S|\mathbf{x})$ (see Theorem~\ref{thm:upperbound}). Moreover, Equation~(\ref{eq:eq7}) shows that the LLM's output on any randomly masked sample $\mathbf{x}_T$ can always be written as the sum of the decomposed effects, \emph{i.e.}, $\forall T\subseteq N$, $v(x_{n+1}|\hat{\mathbf{x}}_T) =\sum_{S\in \Omega_{\text{\rm and}}(\hat{\mathbf{x}}):\emptyset\neq S\subseteq T}\left(\mathcal{J}^{\text{\rm f}}_{\text{\rm and}}(S|\hat{\mathbf{x}})\!+\!\mathcal{J}^{\text{\rm c}}_{\text{\rm and}}(S|\hat{\mathbf{x}})\!+\!\mathcal{K}_{\text{\rm and}}(S|\hat{\mathbf{x}})\right) 
+\sum_{S\in \Omega_{\text{\rm or}}(\hat{\mathbf{x}}):S\cap T\neq \emptyset}\left(\mathcal{J}^{\text{\rm f}}_{\text{\rm or}}(S|\hat{\mathbf{x}}) \!+\!\mathcal{J}^{\text{\rm c}}_{\text{\rm or}}(S|\hat{\mathbf{x}}) \!+\! \mathcal{K}_{\text{\rm or}}(S|\hat{\mathbf{x}})\right) + v(x_{n+1}|\hat{\mathbf{x}}_\emptyset).$ Please see Appendix~\ref{appendix:universal matching prove} for proof. The sparsity property and the universal property mathematically ensure the faithfulness of using the decomposed effects to explain the language generation of an LLM. It is because we can use a small number of salient interactions to predict every fine-grained changes of the LLM on $2^n$ differently masked samples.

\textbf{Visualization of the distribution of memorization effects and reasoning effects over different orders.} Here, we use the \textit{order} of the interaction to represent the number of input words in $S$, \emph{i.e.}, $\textit{order}(S) = |S|$.~\citet{zhou2024explaining} have discovered that high-order interactions have lower generalization power than low-order interactions. For each order $m$, we quantify the strength of all positive salient foundational memorization interactions of the $m$-th order $\mathcal{E}^{m, \text{pos}}_{{\text{\rm f}}} = \sum_{\text{type} \in \{\text{and, or}\}}\sum_{S\in\Omega_{\text{\rm type}}:|S|=m}\max(0, \mathcal{J}_{\text{\rm type}}^{\text{\rm f}}(S|\mathbf{x}))$ and the strength of all negative salient foundational memorization interactions of the $m$-th order $\mathcal{E}^{m, \text{neg}}_{{\text{\rm f}}} = -\sum_{\text{type} \in \{\text{and, or}\}}\sum_{S\in\Omega_{\text{\rm type}}:|S|=m}\min(0, \mathcal{J}_{\text{\rm type}}^{\text{\rm f}}(S|\mathbf{x}))$. Similarly, we can obtain $\mathcal{E}^{m, \text{pos}}_{\text{\rm c}}$ and $\mathcal{E}^{m, \text{neg}}_{\text{\rm c}}$ for chaotic memorization interactions, and $\mathcal{E}^{m, \text{pos}}_{\mathcal{K}}$ and $\mathcal{E}^{m, \text{neg}}_{\mathcal{K}}$ for in-context reasoning interactions. Figure~\ref{fig:fig3} visualizes the foundational memorization effects, chaotic memorization effects and reasoning effects over different orders. 

In addition, we use the following metric to quantify the ratio of the in-context reasoning effect $\rho_{\text{r}}$ and the ratio of the chaotic memorization effect  $\rho_{\text{c}}$ to all effects $\mathcal{E}_{\text {all}}$. 
\begin{equation}
    \rho_{\text{r}} = \frac{\sum_m\mathcal{E}^{m, \text{pos}}_{\mathcal{K}} +   \mathcal{E}^{m, \text{neg}}_{\mathcal{K}}}{\mathcal{E}_{\text{all}}}, \quad
    \rho_{\text{c}} =  \frac{\sum_m\mathcal{E}^{m, \text{pos}}_{\text{\rm c}} + \mathcal{E}^{m, \text{neg}}_{\text{\rm c}}}{\mathcal{E}_{\text{all}}},
\end{equation}
where $\mathcal{E}_{\text{all}} = \sum_{m}\mathcal{E}^{m, \text{pos}}_{\text{\rm f}}  +  \mathcal{E}^{m, \text{neg}}_{\text{\rm f}}  +  \mathcal{E}^{m, \text{pos}}_{\text{\rm c}}  +  \mathcal{E}^{m, \text{neg}}_{\text{\rm c}} +  \mathcal{E}^{m, \text{pos}}_{\mathcal{K}}  + \mathcal{E}^{m, \text{neg}}_{\mathcal{K}}$.
  \begin{table}[t]
  \begin{minipage}[c]{0.57\textwidth}
  \centering
  {\tiny
    \begin{tabular}{c ccc}
    \toprule
    &OPT-1.3B & LLaMA-7B & GPT-3.5-Turbo \\
    \midrule
    $\rho_{\text{r}} \uparrow$ & $39.12\%$  &  $42.82\%$& $\mathbf{45.24\%}$ \\
    \midrule
    $\rho_{\text{c}} \downarrow$ & $\mathbf{7.37}\%$  &  $7.57\%$& $7.81\%$ \\
    \bottomrule 
    \end{tabular}}
      \end{minipage}
      \begin{minipage}[c]{0.4\textwidth}
          \caption{The ratio $\rho_{\text{r}}$ of in-context reasoning effects and the ratio $\rho_{\text{c}}$ of chaotic memorization effects.}
    \label{tab:ratio}
      \end{minipage}
\end{table}   

\begin{figure}[t]
  \begin{minipage}[c]{0.67\textwidth}
    \includegraphics[clip, trim=0cm 0cm 0cm 0cm, width=0.95\textwidth]{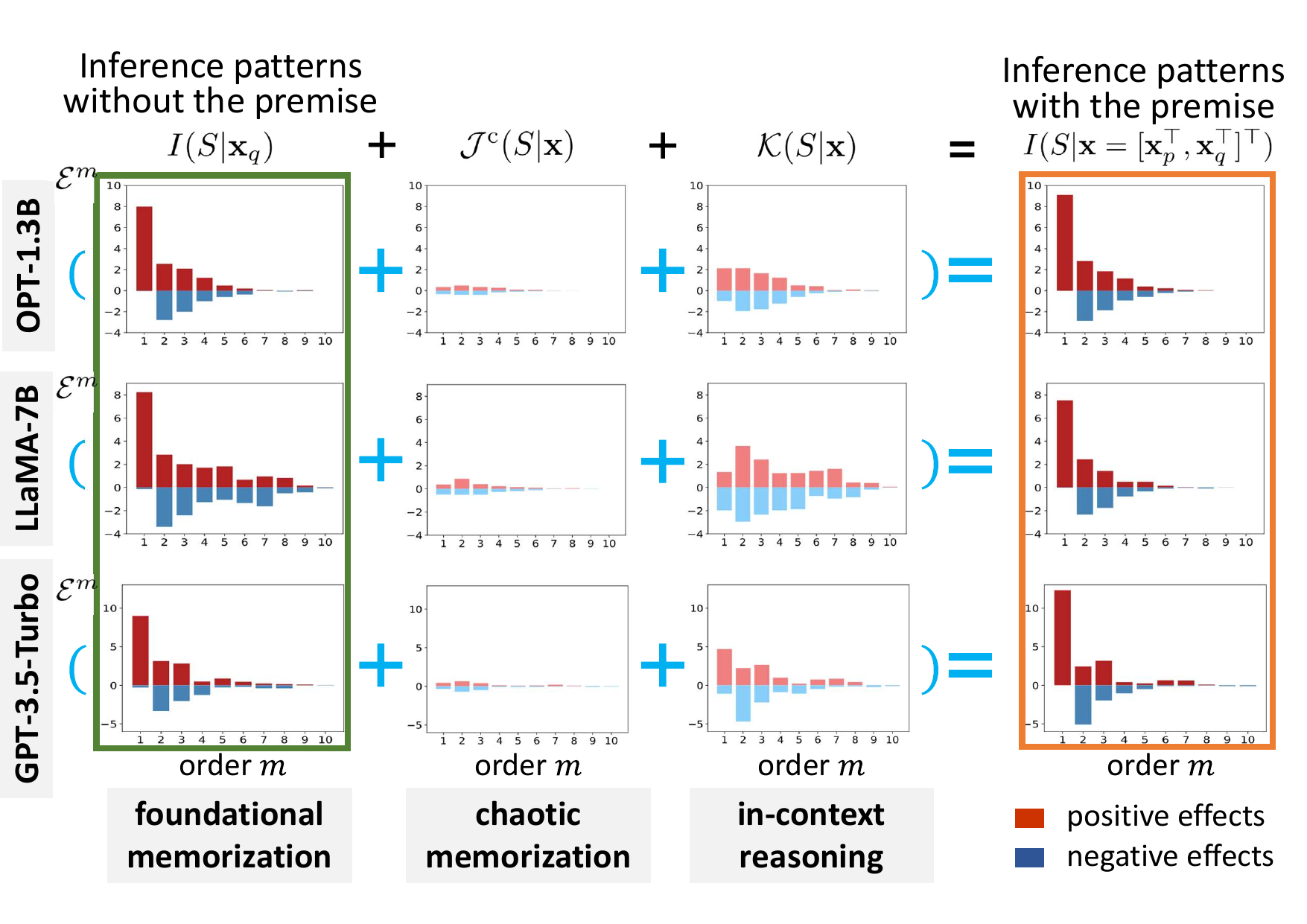}
  \end{minipage}
  \begin{minipage}[c]{0.3\textwidth}
    \caption{
       Distribution of the strength of foundational memorization interactions ($\mathcal{J}^{\text{f}}_{\text{and}}(S|\mathbf{x}), \mathcal{J}^{\text{f}}_{\text{or}}(S|\mathbf{x})$), chaotic memorization interactions ($\mathcal{J}^{\text{c}}_{\text{and}}(S|\mathbf{x}), \mathcal{J}^{\text{c}}_{\text{or}}(S|\mathbf{x})$) and in-context reasoning interactions ($\mathcal{K}_{\text{and}}(S|\mathbf{x}), \mathcal{K}_{\text{or}}(S|\mathbf{x})$) by order $m$. The results are averaged over all samples. Please see experimental settings in Section~\ref{sec:fine-grained} for details, and Appendix~\ref{appendix:result} for results on the individual sample.
    }  \label{fig:fig3}
  \end{minipage}
\end{figure}

Table~\ref{tab:ratio} shows the in-context reasoning effect ratio $\rho_{\text{r}}$ and the chaotic memorization effect ratio $\rho_{\text{c}}$ tested on three LLMs: OPT-1.3b, LLaMA-7B, and GPT-3.5-Turbo\footnote{Please see the experimental settings in the end of Section~\ref{sec:fine-grained}.}. \textbf{We observed that all three models encoded substantial in-context reasoning effects in inference and relatively weak chaotic memorization effects. The low ratio of chaotic memorization effects indicated the good representation quality of the LLM.} Among the three models, GPT-3.5-Turbo demonstrated the highest in-context reasoning effect ratio.

\textbf{Memorization effects contained interactions of varying orders.} As shown in Figure~\ref{fig:fig3}, when we only input the question $\mathbf{x}_q$ into the LLM, \emph{i.e.}, only using foundational memorization effects in the inference, the LLM encodes interactions with all orders.

\textbf{Fine-grained analysis of in-context reasoning effect.} Then, in this subsection, we further categorize the reasoning effect ($\mathcal{K}_{\text{\rm and}}(S|\mathbf{x})$, $\mathcal{K}_{\text{\rm or}}(S|\mathbf{x})$) into the following three types.

$\bullet$ \textit{The enhanced inference patterns.} The enhanced inference patterns are referred to as in-context reasoning effects which are enhanced by the premise $\mathbf{x}_p$, \emph{i.e.}, when we add the premise $\mathbf{x}_p$ into the prompt, the interaction strength $\vert I_{\text{and}}(S|\mathbf{x})\vert=\vert\mathcal{J}_{\text{and}}(S|\mathbf{x})+\mathcal{K}_{\text{and}} (S|\mathbf{x})\vert$ is enhanced by the reasoning effect $\mathcal{K}_{\text{and}} (S|\mathbf{x})$. Given all AND interactions in $\Omega_{\text {and}}$, the enhanced inference patterns are identified as follows.
\begin{equation}
        \Omega_{\text{\rm and}}^{\text{\rm enhanced}}=\{S\in \Omega_{\text{\rm and}}: \mathcal{J}_{\text{\rm and}}(S|\mathbf{x})\mathcal{K}_{\text{\rm and}}(S|\mathbf{x})>0\}.
\end{equation}
$\bullet$ \textit{The eliminated inference patterns.}  The eliminated inference patterns are in-context reasoning effects which are eliminated by the premise $\mathbf{x}_p$,  \emph{i.e.}, when we add the premise $\mathbf{x}_p$ into the prompt, the strength of the interaction is reduced by the reasoning effect $\mathcal{K} (S|\mathbf{x})$. Given all AND interactions in $\Omega_{\text {and}}$, the eliminated inference patterns are defined as follows.
\begin{equation}
    \Omega_{\text{\rm and}}^{\text{\rm eliminated}}=\{S\in \Omega_{\text{\rm and}}:\mathcal{J}_{\text{\rm and}}(S|\mathbf{x})\mathcal{K}_{\text{\rm and}}(S|\mathbf{x})<0,~ \vert\mathcal{J}_{\text{\rm and}}(S|\mathbf{x})\vert \geq \vert\mathcal{K}_{\text{\rm and}}(S|\mathbf{x}) \vert\}. 
\end{equation}
$\bullet$ \textit{The reversed inference patterns.} The interactions extracted from the LLM can either positively or negatively impact the confidence score. The reversed inference patters are those in-context reasoning effects which change sign of the final interaction $I_{\text{and}}(S|\mathbf{x})$. Given all AND interactions in $\Omega_{\text {and}}$, the reversed inference patterns are identified as follows. 
\begin{equation}
        \Omega_{\text{\rm and}}^{\text{\rm reversed}} =\{S\in \Omega_{\text{\rm and}}:\mathcal{J}_{\text{\rm and}}(S|\mathbf{x})\mathcal{K}_{\text{\rm and}}(S|\mathbf{x})<0,~ \vert\mathcal{J}_{\text{\rm and}}(S|\mathbf{x})\vert < \vert\mathcal{K}_{\text{\rm and}}(S|\mathbf{x}) \vert\}.
\end{equation}
In this way, we have $\Omega_{\text{\rm and}}^{\text{\rm enhanced}}\cup \Omega_{\text{\rm and}}^{\text{\rm eliminated}} \cup \Omega_{\text{\rm and}}^{\text{\rm reversed}} = \Omega_{\text{and}}$. 

\begin{figure}[t]
    \centering
    \includegraphics[clip, trim=0cm 6cm 0cm 0cm, width=0.9\textwidth]{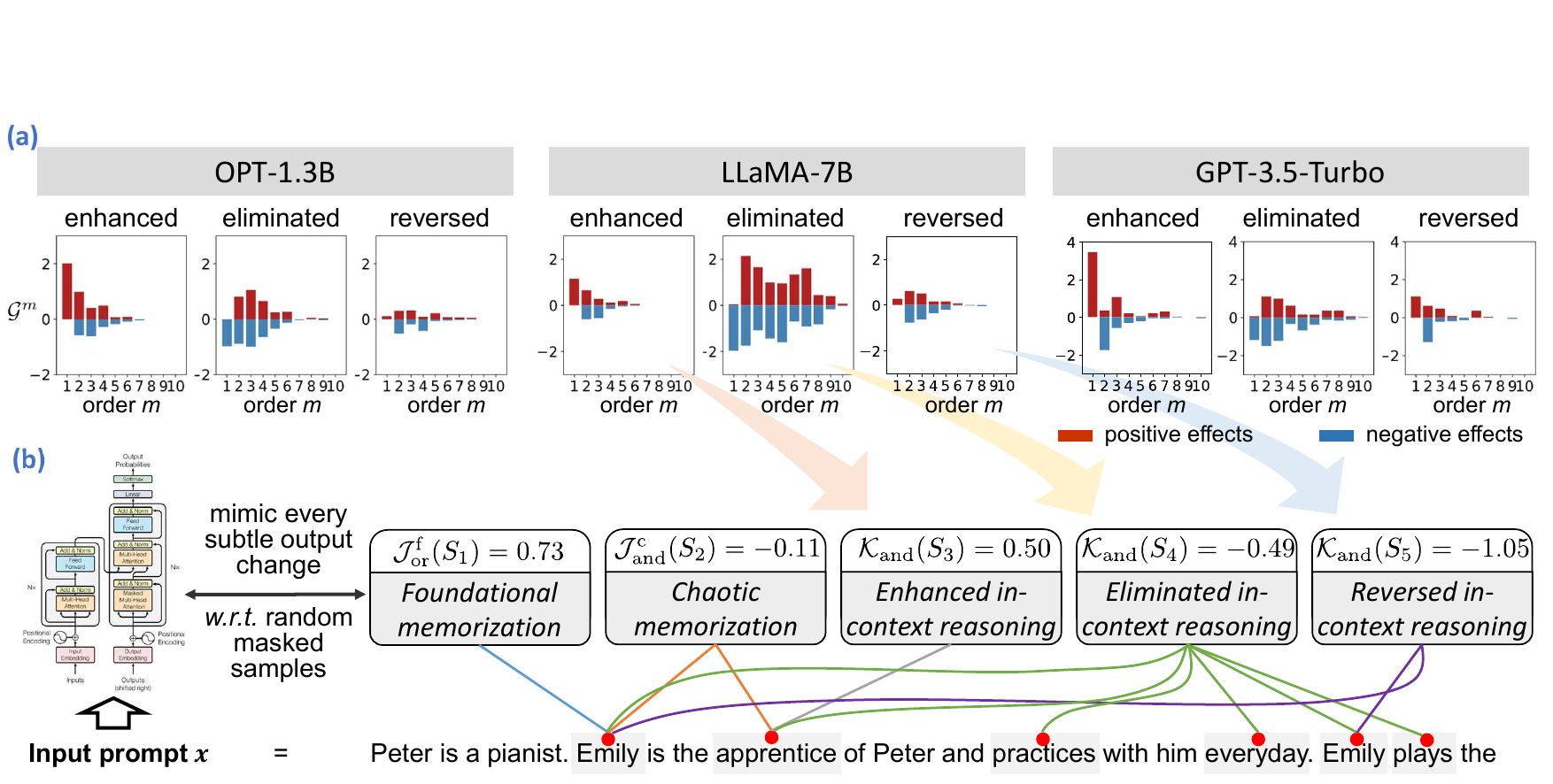}
    \caption{(a) Distribution of the strength of three types of in-context reasoning effects over different orders $m$. The results are averaged over all samples. (b) Visualization of some in-context reasoning effects and memorization effects on a single sample.}
    \label{fig:fig4}
\end{figure}

Similarly, we can identify $\Omega_{\text{\rm or}}^{\text{\rm enhanced}}$, $\Omega_{\text{\rm or}}^{\text{\rm eliminated}}$ and $\Omega_{\text{\rm or}}^{\text{\rm reversed}}$ for OR interactions. Figure~\ref{fig:fig4} visualizes the distribution of the above three types of reasoning effects over different orders. For each order $m$, we quantify the strength of the reasoning effect of all positive enhanced inference patterns, \emph{i.e.}, $\mathcal{G}^{m, {\text{\rm pos}}}_\text{enhanced} = \sum_{\text{type}\in \{\text{and, or}\}}\sum_{S\in \Omega_\text{type}^\text{enhanced}:|S|=m}\max(0, \mathcal{K}_{\text{type}}^\text{f}(S|\mathbf{x}))$ and $\mathcal{G}^{m, {\text{\rm neg}}}_\text{enhanced} = -\sum_{\text{type}\in \{\text{and, or}\}}\sum_{S\in \Omega_\text{type}^\text{enhanced}:|S|=m}\min(0, \mathcal{K}_{\text{type}}^\text{f}(S|\mathbf{x})) $. Similarly, we can obtain $\mathcal{G}^{m, {\text{\rm pos}}}_\text{eliminated}$ and $\mathcal{G}^{m, {\text{\rm neg}}}_\text{eliminated}$ for the eliminated inference patterns, $\mathcal{G}^{m, {\text{\rm pos}}}_\text{flipped}$ and $\mathcal{G}^{m, {\text{\rm neg}}}_\text{flipped}$ for the flipped inference patterns.

\textbf{Conclusion: in-context reasoning effects tend to eliminate high-order interactions and amplify low-order interactions in memorization effects.} As shown in Figure~\ref{fig:fig4}, when we added the premise to the prompt, reasoning effects mainly eliminated some memorization effects, especially those corresponding to high-order interactions. We can consider that the LLM uses the premise to filter out irrelevant memorization effects. Meanwhile, the reasoning effects also enhanced/reversed a small number of low-order interactions to refine the inference logic. Because high-order interactions have been found less generalizable than low-order interactions~\cite{zhou2024explaining}, the addition of the premise makes the inference much clearer and more faithful.

\begin{figure}[t]
    \centering
    \includegraphics[clip, trim=3cm 9.5cm 3cm 2.8cm, width=0.9\textwidth]{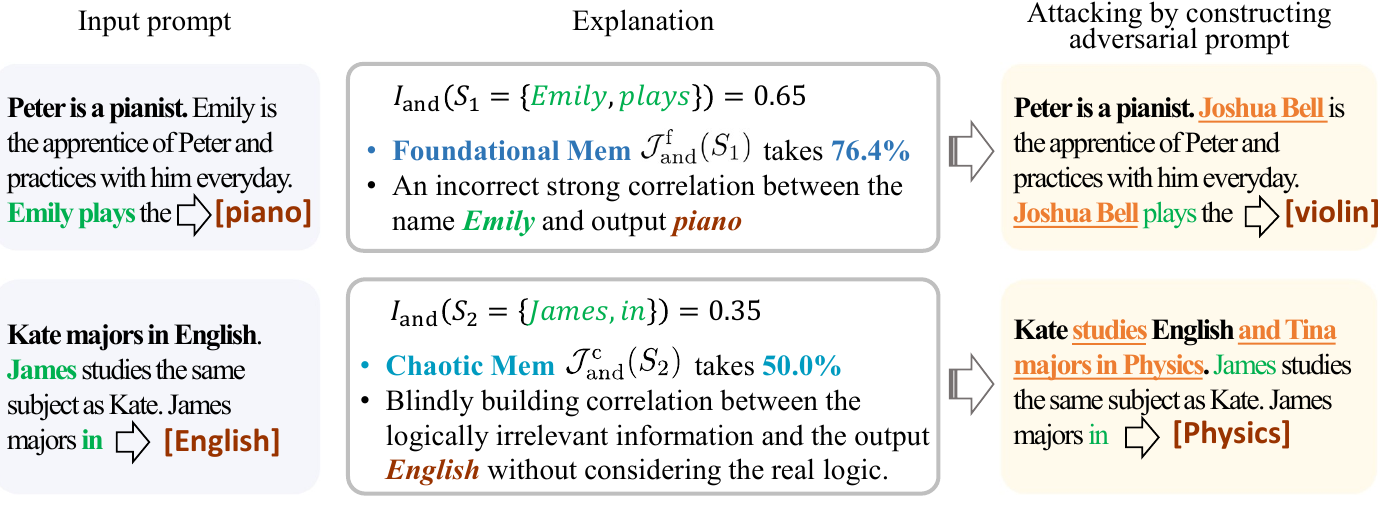}
    \caption{Interactions illustrate problematic inference patterns used by the LLM. Then, we construct adversarial prompts based on the interaction.}
    \label{fig:fig5}
\end{figure}

\textbf{Experimental settings.} We tested on some samples for a question-answer (QA) task. For each QA input sample, it contained a premise $\mathbf{x}_p$ and a question $\mathbf{x}_q$. We selected $n$ words\textsuperscript{\ref{footnote:choosewords}} in the question for each sample. For every sample, we constructed 10 logically equivalent prompts (see Table~\ref{tab:logical_equivalent}, each containing the same $n$ input words. For a detailed list of all examples, please refer to Appendix~\ref{appendix:prompts}.
We conducted experiments on three LLMs: OPT-1.3B \cite{zhang2022opt}, LLaMA-7B \cite{touvron2023llama}, and GPT-3.5-Turbo \cite{brown2020language}. We took the prompt in each QA sample, and let the LLM predict the next word $x_{n+1}$. 
Then, we calculated the confidence score $v(x_{n+1}|\mathbf{x})$. $x_{n+1}$ was determined as the real answer corresponding to the input prompt. 

\section{Conclusion and discussions on the practical value}\label{sec:conclusion}
In this paper, we have proposed an axiomatic system to rigorously define and quantify the context-agnostic memorization effects and the in-context reasoning effects used by the LLM for inference. We have further categorized the memorization effects into foundational memorization effects and chaotic memorization effects, and have classified the in-context reasoning effects into the enhanced inference patterns, the eliminated inference patterns and the reversed inference patterns. The proven sparsity and universal matching property of decomposed effects ensure the faithfulness of taking these effects as primitive inference patterns used by the LLM.

$\bullet$ \textbf{Identifying seemingly correct yet actually incorrect inference logic.} If we only evaluate the LLM based on its outputs, the superior performance of the LLM has led to a misconception that the LLM has already modeled reliable knowledge or conducted inference based on correct logic. However, our method first has enabled us to quantify the exact inference patterns of different types, and experiments have shown that many inference patterns represent incorrect logic.

$\bullet$ \textbf{Enabling semantic debugging.} Unlike traditional engineering explanation techniques~\cite{tenney2020language}, our method with theoretical guarantees (\emph{i.e.}, the universal-matching property) can guide the construction of adversarial prompts to attack LLMs. For instance, as shown in Figure~\ref{fig:fig5}, the interaction between $\{$\textit{Emily, plays}$\}$ makes significant contribution on the network output. However, its interaction effect primarily attributed to the foundational memorization effect ($76\%$). Therefore, the LLM mistakenly encoded a strong prior that associated the output \text{``piano''} with the interaction $\{$\textit{Emily, plays}$\}$. We constructed an adversarial prompt by replacing \textit{``Emily''} to \textit{``Joshua Bell''} to attack the LLM. In this way, we observed the LLM outputting \textit{``violin''} instead of \textit{``piano''}. 

We conducted a mathematical examination of the faithfulness of our interaction explanations (see Appendices~\ref{appendix:universal matching prove} and~\ref{appendix:faithfulness}). \textbf{In summary, our theoretical framework enables the examination of the correctness of detailed inference patterns encoded by LLM, which goes beyond the traditional evaluation of LLM's outputs}.

\newpage
\bibliographystyle{plainnat}
\bibliography{reference.bib}
\appendix

\newpage

\section{Appendix}
\subsection{Related work}
Previous research has extensively explored the qualitative aspects of memorization and reasoning effects in the inference of LLMs. For example, previous studies~\cite{carlini2019secret,carlini2021extracting, mccoy2023much, tirumala2022memorization} have demonstrated that it was possible to let LLM output memorized data, such as phone numbers and usernames, when appropriate prompts are provided.  Recent research has suggested that reasoning ability may emerge in LLMs~\cite{wei2022emergent, wei2022chain}. Evaluating the reasoning capabilities of LLMs often involves reporting their performance, \emph{e.g.}, accuracy, on specific tasks that require reasoning. Common benchmarks for assessing LLM's reasoning capabilities include tasks related to arithmetic reasoning~\cite{cobbe2021training, amini2019mathqa} and commonsense reasoning~\cite{talmor2019commonsenseqa, geva2021did}. While LLMs have demonstrated remarkable performance on various memorization and reasoning tasks, the degree to which their predictions rely on reasoning or memorization remains unclear. Existing studies typically focus on end-task accuracy rather than directly analyzing and quantifying the exact memorization and reasoning effects within the inference process. In this work, we aim to disentangle and quantify the inference logic into memorization effects and reasoning effects\footnote{There are no widely accepted boundaries between memorization and reasoning. Therefore, to facilitate our analysis, we focus on context-agnostic memorization and in-context reasoning, as detailed in Section~\ref{sec:mvsr}.}.

However, before disentangling the inference logic into specific memorization effects and reasoning effects. We need to define the explicit logic used by the LLM. To this end, \citet{ren2023defining} have proposed to use interactions between input variables encoded by the deep neural network (DNN) to explain the detailed logic in the network. Subsequently,~\cite{ren2024where} have mathematically proven that DNNs often only encode a small number of interactions, and the output score of the DNN on any random masked input sample can always be well approximated by numerical effects of these interactions. Moreover, a set of desirable properties of interactions have been further discovered, such as transferability~\cite{li2023does}, discriminative power~\cite{li2023does} and generalization power~\cite{zhou2023explaining}.

\subsection{Extraction of AND-OR interactions}\label{append:extract}
The inference logic of neural networks is often very complex, therefore it is difficult to solely rely on one type of interaction (whether it's AND interaction or OR interaction) to concisely and faithfully explain the true inference primitives encoded by the DNN. To address this issue,~\citet{zhou2023explaining} proposed to decompose the output $v(\mathbf{x}_T)$ of the neural network into two parts: one part modeled by AND interactions, denoted as $v_{\text{and}}(x_T) = 0.5v(\mathbf{x}_T) + \theta_T$, and the other part modeled by OR interactions, denoted as $v_{\text{or}}(x_T) = 0.5v(\mathbf{x}_T) - \theta_T$. Here, $\{\theta_T\}$ was a set of learnable parameters. Therefore, finding the most suitable decomposition between $v_{\text{and}}(\mathbf{x}_T)$ and $v_{\text{or}}(\mathbf{x}_T)$ was equivalent to determining the optimal $\{\theta_T\}$, where $\theta_T \in \mathbb{R}$. \citet{zhou2023explaining} proposed to learn the optimal parameters $\{\theta_T\}$ by minimizing the $L_1$ norm of AND and OR interactions, aiming to generate the sparsest AND-OR interactions.
\begin{equation}
    \text{minimize}_{\{\theta_T\}} \|\mathbf{I}_{\text{and}}\|_1 + \|\mathbf{I}_{\text{or}}\|_1,
\end{equation}
where $\mathbf{I}_{\text{and}} = [I_{\text{and}}(T_1|\mathbf{x}), \dots, I_{\text{and}}(T_{2^n}|\mathbf{x})]^\top$ and $\mathbf{I}_{\text{or}} = [I_{\text{or}}(T_1|\mathbf{x}),  \dots, I_{\text{or}}(T_{2^n}|\mathbf{x})]^\top$, $T_k \subseteq N$.

\textbf{Modeling noises.} \citet{chen2024defining} found that in practical application scenarios, there inevitably existed some noises in the network's outputs, which might not be well explained using AND-OR interactions. Assuming that there were small noises in the network output $v(\mathbf{x}_T)$. Such noises were represented by adding Gaussian noise $ \epsilon_T \sim \mathcal{N}(0, \sigma^2)$ with small variance to the network output, \emph{i.e.}, $v'_{\text{and}}(\mathbf{x}_T) = v_{\text{and}}(\mathbf{x}_T) + \epsilon_T$ and $v'_{\text{or}}(\mathbf{x}_T) = v_{\text{or}}(\mathbf{x}_T) + \epsilon_T$.  \citet{chen2024defining} further proposed to directly learn the error term $\epsilon_T$ to remove the noisy signals, \emph{i.e.}, setting $v(\mathbf{x}_T) = v_{\text{and}}(\mathbf{x}_T) + v_{\text{or}}(x_T) + \epsilon_T$.

\subsection{Proofs for $v_{\text{and}}(\mathbf{x}_T)=\sum_{S\subseteq T}I_{\text{and}}(S|\mathbf{x})$ and $v_{\text{or}}(\mathbf{x}_T)=\sum_{S\cap T\neq\emptyset}I_{\text{or}}(S|\mathbf{x})$}\label{appendix:poofforvcomponent}

In this subsection, we will first prove that the component $v_{\text{and}}(\mathbf{x}_T)$ of the neural network can be solely modeled by AND interactions, \emph{i.e.}, $\forall T\subseteq N$, $v_{\text{and}}(\mathbf{x}_T)=\sum_{S\subseteq T}I_{\text{and}}(S|\mathbf{x})$.

\begin{proof}
    According to Equation~(\ref{eq:andor}), the AND interaction is defined as $I_{\text{and}}(S|\mathbf{x}) = \sum_{L\subseteq S}(-1)^{|S|-|L|}v_{\text{and}}(\mathbf{x}_L)$, hence, 
    \begin{equation}
        \begin{aligned}
            \sum\nolimits_{S\subseteq T}I_{\text{and}}(S|\mathbf{x}) &= \sum\nolimits_{S\subseteq T}\sum\nolimits_{L\subseteq S}(-1)^{|S|-|L|}v_{\text{and}}(\mathbf{x}_L) \\
            & = \sum\nolimits_{L\subseteq T}\sum\nolimits_{S:L\subseteq S\subseteq T}(-1)^{|S|-|L|}v_{\text{and}}(\mathbf{x}_L)\\
            & = v_{\text{and}}(\mathbf{x}_T) +\sum\nolimits_{L\subset T}v_{\text{and}}(\mathbf{x}_L)\cdot \underbrace{\sum\nolimits_{m=0}^{|T|-|L|}\left(\begin{array}{cc}
                 |T|-|L|  \\
                 m 
            \end{array} \right)(-1)^m}_{=0} \\
            & = v_{\text{and}}(\mathbf{x}_T). 
        \end{aligned}   
    \end{equation}
\end{proof}

Then we will prove that the network output component $v_{\text{or}}(\mathbf{x}_T)$ can be solely explained by OR interactions, \emph{i.e.},  $\forall T\subseteq N$, $v_{\text{or}}(\mathbf{x}_T)=\sum_{S\cap T\neq \emptyset}I_{\text{or}}(S|\mathbf{x})$.

\begin{proof}
    According to Equation~(\ref{eq:andor}), the OR interaction is defined as $I_{\text{or}}(S|\mathbf{x}) = -\sum\nolimits_{L\subseteq S, S\neq \emptyset}(-1)^{|S|-|L|}v_{\text{\rm or}}(\mathbf{x}_{N\setminus L})$ and $I_{\text{or}}(\emptyset|\mathbf{x})=v_{\text{or}}(\mathbf{x}_\emptyset)$, therefore, 
    \begin{equation}
        \begin{aligned}
          \sum_{S:S\cap T\neq \emptyset}I_{\text{or}}(S|\mathbf{x})  =& \sum\nolimits_{S:S\cap T\neq \emptyset}\left[-\sum\nolimits_{L\subseteq S, S\neq \emptyset}(-1)^{|S|-|L|}v_{\text{\rm or}}(\mathbf{x}_{N\setminus L})\right] +v_{\text{or}}(\mathbf{x}_\emptyset)\\
        =& -\sum\nolimits_{L\subseteq N}\sum\nolimits_{S:S\cap T \neq \emptyset, S\supseteq L}(-1)^{|S|-|L|}v_{\text{or}}(\mathbf{x}_{N\setminus L}) +v_{\text{or}}(\mathbf{x}_\emptyset)\\
          =& - \underbrace{v_{\text{or}}(\mathbf{x}_\emptyset)}_{L=N} - \underbrace{v_{\text{or}}(\mathbf{x}_T)}_{L=N\setminus T}\cdot \underbrace{\sum\nolimits_{|S'|=1}^{|T|} \left(\begin{array}{cc}
               |T|\\
               |S'|
          \end{array}\right)(-1)^{|S'|}}_{=-1} \\
        & -\!\!\!\!\!\!\!\! \sum_{L\cap T\neq\emptyset, L\neq N}\sum_{S'\subseteq N\setminus T \setminus L}\underbrace{\sum_{|S''|=0}^{|T|-|T\cap L|}\left(\begin{array}{cc}
             |T|-|T\cap L| \\
             |S''|
        \end{array}\right)(-1)^{|S'|+|S''|}}_{=0}\cdot v_{\text{or}}(\mathbf{x}_{N\setminus L}) \\
        & - \!\!\!\!\!\!\!\!\sum_{L\cap T=\emptyset, L\neq N\setminus T}\sum_{S'\subseteq N\setminus T\setminus L}\underbrace{\sum_{|S''|=0}^{|T|}\left(\begin{array}{cc}
             |T|  \\
             |S''| 
        \end{array}\right)(-1)^{|S'|+|S''|}}_{=0} \cdot v_{\text{or}}(\mathbf{x}_{N\setminus L})  + v_{\text{or}}(\mathbf{x}_\emptyset) \\
        = & v_{\text{or}}(\mathbf{x}_T).
        \end{aligned}
    \end{equation}
\end{proof}

\subsection{Sparsity property of interactions} \label{appendix:sparsity}
Given a sample with $n$ input variables (tokens), among all $2^n$ possible interactions, only approximately $\mathcal{O}(n^\kappa / \tau)$ interactions have salient interaction effects~\cite{ren2024where}. Empirically, $\kappa\in[0.9,1.2]$, so that the number of salient interactions $\Gamma$ is much less than $2^n$, \emph{i.e.}, the interactions are sparse.

We conducted experiments on OPT-1.3B, LLaMA-7B and GPT-3.5-Turbo. We used the prompts in our question-answer dataset (see Appendix~\ref{appendix:prompts}) and let the LLM generate the next word. We used the method described in Appendix~\ref{append:extract} to extract AND-OR interactions. Figure~\ref{fig:fig6} shows all interaction effects encoded by the LLM on different input prompts. It shows that there are only a few salient interactions extracted from an input prompt, and all other interactions have negligible effects on the LLM's output. 

\textbf{Sparsity property of decomposed effects.} We also conducted experiments to verify the sparsity of the decomposed effects. Each interaction effect $I_{\text{and}}(S|\mathbf{x})$ can be decomposed into three effects, foundational memorization effect $\mathcal{J}^{\text{f}}_{\text{and}}(S|\mathbf{x})$, chaotic memorization effect $\mathcal{J}^{\text{c}}_{\text{and}}(S|\mathbf{x})$ and in-context reasoning effect $\mathcal{K}_{\text{and}}(S|\mathbf{x})$. And we can rewrite the LLM's confidence score into the sum of the above three effects (see Figure~\ref{eq:eq7}). We sorted all effects in descending order of their strength, denoted as $\mathcal{E}(S|\mathbf{x})$, and Figure~\ref{fig:fig7} shows there are only a few effects that are salient, and all other effects have negligible effects on the LLM's output.

\begin{figure}[t]
    \centering
    \includegraphics[width=0.9\textwidth]{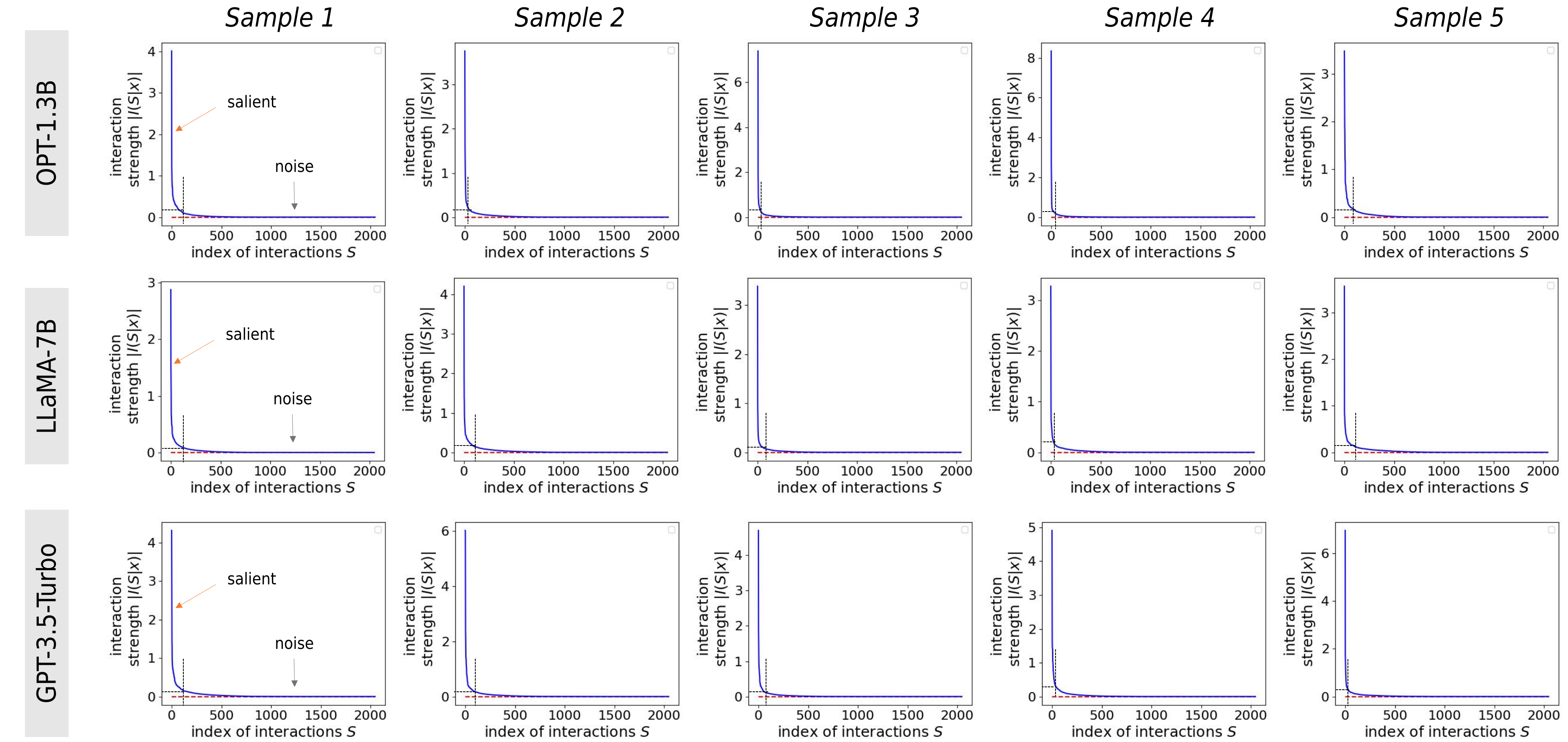}
    \caption{Interactions in descending order of interaction strength $\vert I(S|\mathbf{x})\vert$. }
    \label{fig:fig6}
\end{figure}

\begin{figure}[t]
    \centering
    \includegraphics[width=0.9\textwidth]{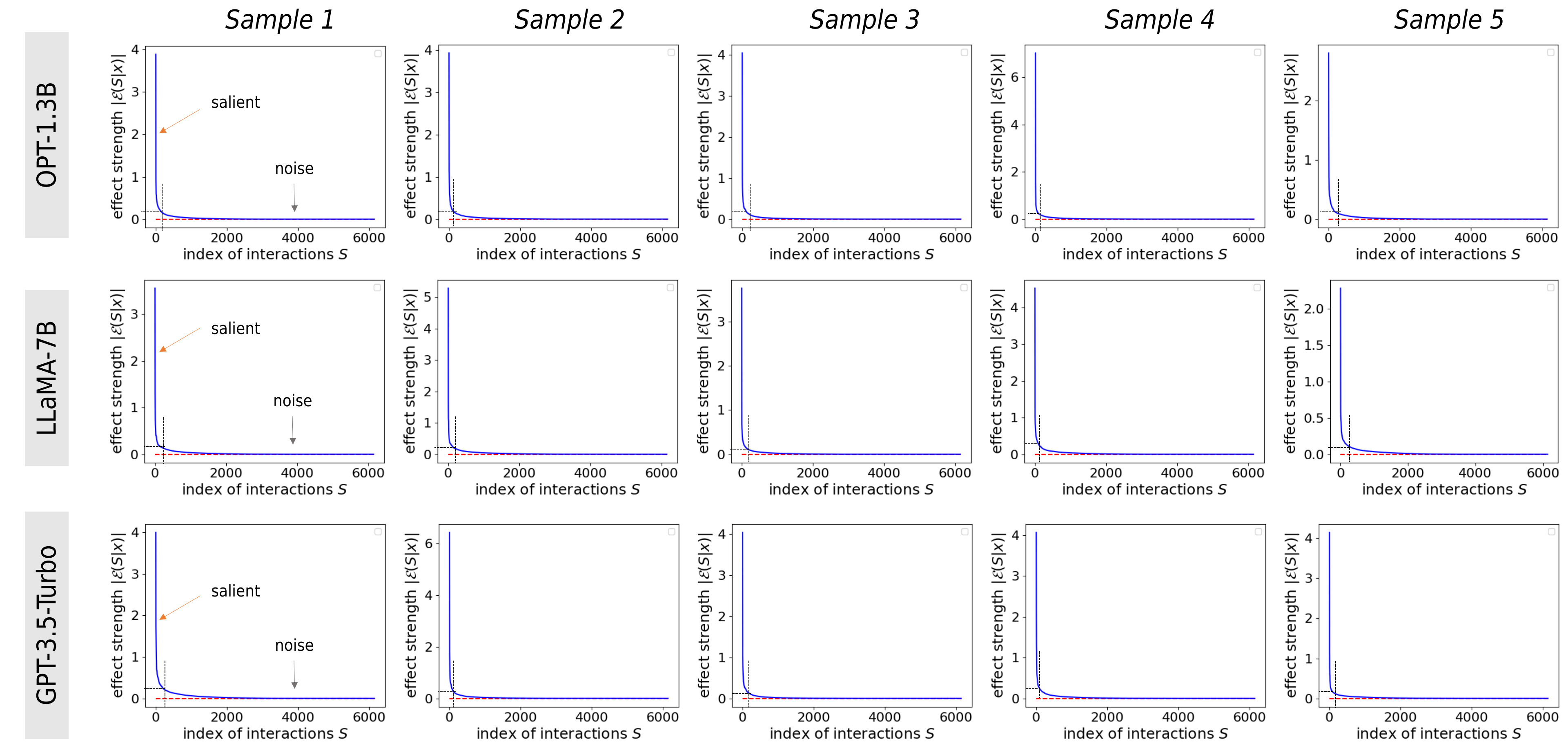}
    \caption{Decomposed effects in descending order of their strength $\vert \mathcal{E}(S|\mathbf{x})\vert$. }
    \label{fig:fig7}
\end{figure}

\subsection{Explanations of how the OR interactions can be considered as a specific AND interaction}\label{appendix:or-interaction}
The OR interaction can be computed as follows,
\begin{equation}
    I_{\text{or}}(S|\mathbf{x}) = -\sum\nolimits_{T\subseteq S, S\neq \emptyset}(-1)^{|S|-|T|}v(\mathbf{x}_{N\setminus T}),
\end{equation}
where $\mathbf{x}_{N\setminus T}$ denotes a masked sample where the variables in $N\setminus T$ are not masked and the variables in $T$ are masked using the corresponding baseline values  $\mathbf{b}$. 
Now, we consider $b_i$ as the presence of the $i$-th variable and $x_i$ as the masked state of the $i$-th variable, and denoted such masked sample as $\mathbf{x}'_{T}$. In this way,  $\mathbf{x}_{N\setminus T}$ represents the same masked state as $\mathbf{x}'_{T}$, then the OR interaction can be formulated as a specific AND interaction in Equation~(\ref{eq:andor}) as follows,
\begin{align*}
    I_{\text{or}}(S|\mathbf{x}) &= -\sum\nolimits_{T\subseteq S, S\neq \emptyset}(-1)^{|S|-|T|}v(\mathbf{x}_{N\setminus T}) \\
    & = -\sum\nolimits_{T\subseteq S, S\neq \emptyset}(-1)^{|S|-|T|}v(\mathbf{x}'_{T}) \\
    & = -I'_{\text{and}}(S|\mathbf{x}').
\end{align*}
Hence, we can consider an OR interaction as a specific AND interaction.

\subsection{Common conditions for the sparsity of interactions encoded by a DNN}\label{appendix:condition}
\citet{ren2024where} have proved that if an DNN satisfies the following three conditions, it usually encodes a small number of interactions. Furthermore, these interactions can very well approximate the DNN's output across $2^n$ arbitrary masked samples $\{\mathbf{x}_S|S\subseteq N\}$.
\begin{enumerate}
    \item The DNN does not encoded interactions with very high order (extremely complex interactions). In other words, interactions with extremely many input variables have zero numerical contributions to the DNN's output. 
    \item When we mask different random sets of $m$ input variables and obtain masked samples $\{\mathbf{x}_S\vert |S|=n-m\}$, then the average network output $\bar{u}^{(m)}=\mathbb{E}_{|S|=m}[v(\mathbf{x}_S)-v(\mathbf{x}_\emptyset)]$ monotonically decreases with $m$. 
    \item The decreasing speed of the average network output $\bar{u}^{(m)}$ is polynomial, \emph{i.e.}, $\forall m'<m, \bar{u}^{(m')}\geq (\frac{m'}{m})^p\bar{u}^{(m)}$, where $p>0$ is a postive constant.
\end{enumerate}

\subsection{Question-answer dataset and their logically equivalent prompts} \label{appendix:prompts}
We constructed 10 prompts for the question-answer (QA) task to facilitate the analysis and visualization based on the proposed axiomatic system, as shown in Table~\ref{tab:logical_equivalent_full}. Each sample, denoted as the original prompt in Table~\ref{tab:logical_equivalent_full}, comprises a premise $\mathbf{x}_p$ and a question $\mathbf{x}_q$, where the question does not contain any reasoning clues. For every sample, we constructed nine logically equivalent prompts across three categories: removing and adding irrelevant background, semantic paraphrasing and renaming. 
{\footnotesize
\LTcapwidth=\textwidth
\begin{longtable}{l|p{0.7\linewidth}}
    \toprule
        Type &  Logically equivalent prompt \\
    \midrule
    \midrule
        \textbf{Original prompt} & \textcolor{orange}{Mary is the wife of John.} \textcolor{teal}{\underline{James} \underline{is} \underline{the} \underline{son} \underline{of} \underline{John}. \underline{Mary} is \underline{the} \underline{mother} \underline{of}}\\
        \midrule
         \multirow{2}{*}{\makecell{Removing and adding \\irrelevant background}} & a. \textcolor{orange}{Mary is the wife of John and Tina is the sister of Mary.} \textcolor{teal}{James is the son of John. Mary is the mother of}\\
         & b. \textcolor{orange}{Mary is the wife of John and she likes cooking.}. \textcolor{teal}{James is the son of John. Mary is the mother of} \\
         \midrule
         \multirow{2}{*}{Semantic paraphrasing} & a. \textcolor{orange}{John and Mary are married.} \textcolor{teal}{James is the son of John. Mary is the mother of} \\
         &b. \textcolor{orange}{John is the husband of Mary.} \textcolor{teal}{James is the son of John. Mary is the mother of} \\
         \midrule
         \multirow{5}{*}{Renaming} &a. \textcolor{orange}{Mary is the wife of Tim.} \textcolor{teal}{James is the son of Tim. Mary is the mother of} \\
         &b. \textcolor{orange}{Mary is the wife of William.} \textcolor{teal}{James is the son of William. Mary is the mother of} \\ 
         &c. \textcolor{orange}{Mary is the wife of Jack.} \textcolor{teal}{James is the son of Jack. Mary is the mother of} \\ 
         &d. \textcolor{orange}{Mary is the wife of Peter.} \textcolor{teal}{James is the son of Peter. Mary is the mother of} \\ 
         &e. \textcolor{orange}{Mary is the wife of Michael.} \textcolor{teal}{James is the son of Michael. Mary is the mother of} \\ 
    \midrule
    \midrule
    \textbf{Original prompt}  & \textcolor{orange}{David has 1 month of vocations.} \textcolor{teal}{\underline{David} \underline{started} the \underline{vocation} in the \underline{beginning} of \underline{June}. \underline{He} \underline{will} \underline{finish} his \underline{vocation} in the \underline{beginning} of}\\
        \midrule
         \multirow{2}{*}{\makecell{Removing and adding \\irrelevant background}} & a. \textcolor{orange}{David has 1 month of vocations and Sarah only has one day of vocation.} \textcolor{teal}{David started the vocation in the beginning of June. He will finish his vocation in the beginning of}\\
         & b. \textcolor{orange}{David has 1 month of vocations and he plans to travel.}. \textcolor{teal}{David started the vocation in the beginning of June. He will finish his vocation in the beginning of} \\
         \midrule
         \multirow{2}{*}{Semantic paraphrasing} & a. \textcolor{orange}{David has 30 days of vocations.} \textcolor{teal}{David started the vocation in the beginning of June. He will finish his vocation in the beginning of} \\
         &b. \textcolor{orange}{David has 1 month of holidays.} \textcolor{teal}{David started the vocation in the beginning of June. He will finish his vocation in the beginning of} \\
         \midrule
         \multirow{5}{*}{Renaming} &a. \textcolor{orange}{Tim has 1 month of vocations.} \textcolor{teal}{Tim started the vocation in the beginning of June. He will finish his vocation in the beginning of} \\
         &b. \textcolor{orange}{William has 1 month of vocations.} \textcolor{teal}{William started the vocation in the beginning of June. He will finish his vocation in the beginning of} \\ 
         &c. \textcolor{orange}{Peter has 1 month of vocations.} \textcolor{teal}{Peter started the vocation in the beginning of June. He will finish his vocation in the beginning of} \\ 
         &d. \textcolor{orange}{John has 1 month of vocations.} \textcolor{teal}{John started the vocation in the beginning of June. He will finish his vocation in the beginning of} \\ 
         &e. \textcolor{orange}{Jack has 1 month of vocations.} \textcolor{teal}{Jack started the vocation in the beginning of June. He will finish his vocation in the beginning of} \\
    \midrule
    \midrule
    \textbf{Original prompt}  & \textcolor{orange}{John likes cakes.} \textcolor{teal}{\underline{There} \underline{is} a \underline{cake} \underline{on} the \underline{table}. \underline{It} \underline{will} \underline{be} \underline{eaten} \underline{by}}\\
        \midrule
         \multirow{2}{*}{\makecell{Removing and adding \\irrelevant background}} & a. \textcolor{orange}{John likes cakes and Mary does not like cakes.} \textcolor{teal}{There is a cake on the table. It will be eaten by}\\
         & b. \textcolor{orange}{John likes cakes and noodles.}. \textcolor{teal}{There is a cake on the table. It will be eaten by} \\
         \midrule
         \multirow{2}{*}{Semantic paraphrasing} & a. \textcolor{orange}{John takes pleasure in eating cakes.} \textcolor{teal}{There is a cake on the table. It will be eaten by} \\
         &b. \textcolor{orange}{John loves cakes.} \textcolor{teal}{There is a cake on the table. It will be eaten by} \\
         \midrule
         \multirow{5}{*}{Renaming} &a. \textcolor{orange}{John likes candies.} \textcolor{teal}{There is a candy on the table. It will be eaten by} \\
         &b. \textcolor{orange}{John likes peaches.} \textcolor{teal}{There is a peach on the table. It will be eaten by} \\ 
         &c. \textcolor{orange}{John likes pineapple.} \textcolor{teal}{There is a pineapple on the table. It will be eaten by} \\ 
         &d. \textcolor{orange}{John likes cookies.} \textcolor{teal}{There is a cookie on the table. It will be eaten by} \\ 
         &e. \textcolor{orange}{John likes bananas.} \textcolor{teal}{There is a banana on the table. It will be eaten by} \\ 
    \midrule
    \midrule
    \textbf{Original prompt}  & \textcolor{orange}{Peter is a pianist.} \textcolor{teal}{\underline{Emily} is the \underline{apprentice} of \underline{Peter} and \underline{practices} \underline{with} \underline{him} \underline{everyday}. \underline{Emily} \underline{plays} \underline{the}}\\
        \midrule
         \multirow{2}{*}{\makecell{Removing and adding \\irrelevant background}} & a. \textcolor{orange}{Peter is a pianist and Anna is a violinist.} \textcolor{teal}{Emily is the apprentice of Peter and practices with him everyday. Emily plays the}\\
         & b. \textcolor{orange}{Peter is a pianist and he goes often to swim.}. \textcolor{teal}{Emily is the apprentice of Peter and practices with him everyday. Emily plays the} \\
         \midrule
         \multirow{2}{*}{Semantic paraphrasing} & a. \textcolor{orange}{Peter teaches the piano.} \textcolor{teal}{Emily is the apprentice of Peter and practices with him everyday. Emily plays the} \\
         &b. \textcolor{orange}{Peter is skilled at playing the piano.} \textcolor{teal}{Emily is the apprentice of Peter and practices with him everyday. Emily plays the} \\
         \midrule
         \multirow{5}{*}{Renaming} &a. \textcolor{orange}{Adam is skilled at playing the piano.} \textcolor{teal}{Emily is the apprentice of Adam and practices with him everyday. Emily plays the} \\
         &b. \textcolor{orange}{Billy is skilled at playing the piano.} \textcolor{teal}{Emily is the apprentice of Billy and practices with him everyday. Emily plays the} \\ 
         &c. \textcolor{orange}{Ben is skilled at playing the piano.} \textcolor{teal}{Emily is the apprentice of Ben and practices with him everyday. Emily plays the} \\ 
         &d. \textcolor{orange}{Tobias is skilled at playing the piano.} \textcolor{teal}{Emily is the apprentice of Tobias and practices with him everyday. Emily plays the} \\ 
         &e. \textcolor{orange}{Tim is skilled at playing the piano.} \textcolor{teal}{Emily is the apprentice of Tim and practices with him everyday. Emily plays the} \\ 
    \midrule
    \midrule
   \textbf{Original prompt}  & \textcolor{orange}{Julia is having a birthday party today.} \textcolor{teal}{\underline{Anna} \underline{is} \underline{leaving} \underline{work} \underline{early} \underline{today}. \underline{She} \underline{plans} \underline{to} \underline{visit}}\\
        \midrule
         \multirow{2}{*}{\makecell{Removing and adding \\irrelevant background}} & a. \textcolor{orange}{Today is Julia's birthday and she is a friend of Anna.} \textcolor{teal}{Anna is leaving work early today. She plans to visit}\\
         & b. \textcolor{orange}{Julia invited Anna to her birthday party today and they are very good friend.}. \textcolor{teal}{Anna is leaving work early today. She plans to visit} \\
         \midrule
         \multirow{2}{*}{Semantic paraphrasing} & a. \textcolor{orange}{Anna is invited by Julia. } \textcolor{teal}{Anna is leaving work early today. She plans to visit} \\
         &b. \textcolor{orange}{Anna and Julia have an appointment today.} \textcolor{teal}{Anna is leaving work early today. She plans to visit} \\
         \midrule
         \multirow{5}{*}{Renaming} &a. \textcolor{orange}{Julia is having a birthday party today.} \textcolor{teal}{Tina is leaving work early today. She plans to visit} \\
         &b. \textcolor{orange}{Julia is having a birthday party today.} \textcolor{teal}{Emma is leaving work early today. She plans to visit} \\ 
         &c. \textcolor{orange}{Julia is having a birthday party today.} \textcolor{teal}{Cynthia is leaving work early today. She plans to visit} \\ 
         &d. \textcolor{orange}{Julia is having a birthday party today.} \textcolor{teal}{Emily is leaving work early today. She plans to visit} \\ 
         &e. \textcolor{orange}{Julia is having a birthday party today.} \textcolor{teal}{Kate is leaving work early today. She plans to visit} \\ 
    \midrule
    \midrule
    \textbf{Original prompt}  & \textcolor{orange}{Caren works as a teacher.} \textcolor{teal}{\underline{Emily} \underline{is} \underline{the} \underline{colleague} \underline{of} \underline{Caren}. \underline{Emily} \underline{works} \underline{as} \underline{a}}\\
        \midrule
         \multirow{2}{*}{\makecell{Removing and adding \\irrelevant background}} & a. \textcolor{orange}{Caren works as a teacher and Tom works as a doctor.} \textcolor{teal}{Emily is the colleague of Caren. Emily works as a}\\
         & b. \textcolor{orange}{Caren works as a teacher and she likes singing.} \textcolor{teal}{Emily is the colleague of Caren. Emily works as a} \\
         \midrule
         \multirow{2}{*}{Semantic paraphrasing} & a. \textcolor{orange}{Caren teaches students. } \textcolor{teal}{Emily is the colleague of Caren. Emily works as a} \\
         &b. \textcolor{orange}{The school hired Caren as a teacher.} \textcolor{teal}{Emily is the colleague of Caren. Emily works as a} \\
         \midrule
         \multirow{5}{*}{Renaming} &a. \textcolor{orange}{Anna works as a teacher.} \textcolor{teal}{Emily is the colleague of Anna. Emily works as a} \\
         &b. \textcolor{orange}{Tina works as a teacher.} \textcolor{teal}{Emily is the colleague of Tina. Emily works as a} \\ 
         &c. \textcolor{orange}{Emma works as a teacher.} \textcolor{teal}{Emily is the colleague of Emma. Emily works as a} \\ 
         &d. \textcolor{orange}{Cynthia works as a teacher.} \textcolor{teal}{Emily is the colleague of Cynthia. Emily works as a} \\ 
         &e. \textcolor{orange}{Kate works as a teacher.} \textcolor{teal}{Emily is the colleague of Kate. Emily works as a} \\ 
    \midrule
    \midrule
    \textbf{Original prompt}  & \textcolor{orange}{Sarah likes cats.} \textcolor{teal}{\underline{Sarah} \underline{would} \underline{like} \underline{to} \underline{have} \underline{a} \underline{pet}. \underline{She} \underline{will} \underline{get} a}\\
        \midrule
         \multirow{2}{*}{\makecell{Removing and adding \\irrelevant background}} & a. \textcolor{orange}{Sarah likes cats and Lisa likes dogs.} \textcolor{teal}{Sarah would like to have a pet. She will get a}\\
         & b. \textcolor{orange}{Sarah likes cats and cakes.} \textcolor{teal}{Sarah would like to have a pet. She will get a} \\
         \midrule
         \multirow{2}{*}{Semantic paraphrasing} & a. \textcolor{orange}{Sarah adores cats.} \textcolor{teal}{Sarah would like to have a pet. She will get a} \\
         &b. \textcolor{orange}{Sarah greatly enjoys the company of cats.} \textcolor{teal}{Sarah would like to have a pet. She will get a} \\
         \midrule
         \multirow{5}{*}{Renaming} &a. \textcolor{orange}{Anna likes cats.} \textcolor{teal}{Anna would like to have a pet. She will get a} \\
         &b. \textcolor{orange}{Tina likes cats.} \textcolor{teal}{Tina would like to have a pet. She will get a} \\ 
         &c. \textcolor{orange}{Emma likes cats.} \textcolor{teal}{Emma would like to have a pet. She will get a} \\ 
         &d. \textcolor{orange}{Cynthia likes cats.} \textcolor{teal}{Cynthia would like to have a pet. She will get a} \\ 
         &e. \textcolor{orange}{Emily likes cats.} \textcolor{teal}{Emily would like to have a pet. She will get a} \\ 
    \midrule
    \midrule
    \textbf{Original prompt}  & \textcolor{orange}{It takes three days by train from Greece to Paris. } \textcolor{teal}{\underline{Michael} \underline{departs} \underline{from} \underline{Greece} on \underline{Friday}, and \underline{he} \underline{will} \underline{arrive} \underline{in} \underline{Paris} on}\\
        \midrule
         \multirow{2}{*}{\makecell{Removing and adding \\irrelevant background}} & a. \textcolor{orange}{It takes three days by train from Greece to Paris and it only takes one day by train from Berlin to Paris.} \textcolor{teal}{Michael departs from Greece on Friday, and he will arrive in Paris on}\\
         & b. \textcolor{orange}{It takes three days by train from Greece to Paris and will pass by Vienna.} \textcolor{teal}{Michael departs from Greece on Friday, and he will arrive in Paris on} \\
         \midrule
         \multirow{2}{*}{Semantic paraphrasing} & a. \textcolor{orange}{It's a three-day train journey from Greece to Paris.} \textcolor{teal}{Michael departs from Greece on Friday, and he will arrive in Paris on} \\
         &b. \textcolor{orange}{The journey by train from Greece to Paris spans three days.} \textcolor{teal}{Michael departs from Greece on Friday, and he will arrive in Paris on} \\
         \midrule
         \multirow{5}{*}{Renaming} &a. \textcolor{orange}{It takes three days by train from Greece to Paris.} \textcolor{teal}{Tim departs from Greece on Friday, and he will arrive in Paris on} \\
         &b. \textcolor{orange}{It takes three days by train from Greece to Paris. } \textcolor{teal}{John departs from Greece on Friday, and he will arrive in Paris on} \\ 
         &c. \textcolor{orange}{It takes three days by train from Greece to Paris. } \textcolor{teal}{William departs from Greece on Friday, and he will arrive in Paris on} \\ 
         &d. \textcolor{orange}{It takes three days by train from Greece to Paris. } \textcolor{teal}{Jack departs from Greece on Friday, and he will arrive in Paris on} \\ 
         &e. \textcolor{orange}{It takes three days by train from Greece to Paris. } \textcolor{teal}{Peter departs from Greece on Friday, and he will arrive in Paris on} \\
    \midrule
    \midrule
   \textbf{Original prompt}  & \textcolor{orange}{Kate majors in English.} \textcolor{teal}{\underline{James} \underline{studies} \underline{the} \underline{same} \underline{subject} \underline{as} \underline{Kate}. \underline{James} \underline{majors} \underline{in}}\\
        \midrule
         \multirow{2}{*}{\makecell{Removing and adding \\irrelevant background}} & a. \textcolor{orange}{Kate majors in English and she plans to go hiking this weekend.} \textcolor{teal}{James studies the same subject as Kate. James majors in}\\
         & b. \textcolor{orange}{Kate majors in English and John majors in Chemistry.} \textcolor{teal}{James studies the same subject as Kate. James majors in} \\
         \midrule
         \multirow{2}{*}{Semantic paraphrasing} & a. \textcolor{orange}{English is Kate's major field of study.} \textcolor{teal}{James studies the same subject as Kate. James majors in} \\
         &b. \textcolor{orange}{Kate's area of study is English.} \textcolor{teal}{James studies the same subject as Kate. James majors in} \\
         \midrule
         \multirow{5}{*}{Renaming} &a. \textcolor{orange}{Anna majors in English. } \textcolor{teal}{James studies the same subject as Anna. James majors in} \\
         &b. \textcolor{orange}{Tina majors in English. } \textcolor{teal}{James studies the same subject as Tina. James majors in} \\ 
         &c. \textcolor{orange}{Emma majors in English.} \textcolor{teal}{James studies the same subject as Emma. James majors in} \\ 
         &d. \textcolor{orange}{Cynthia majors in English.} \textcolor{teal}{James studies the same subject as Cynthia. James majors in} \\ 
         &e. \textcolor{orange}{Emily majors in English.} \textcolor{teal}{James studies the same subject as Emily. James majors in} \\
    \midrule
    \midrule
    \textbf{Original prompt}  & \textcolor{orange}{Kate lives in Paris.} \textcolor{teal}{\underline{James} \underline{lives} \underline{in} the \underline{same} \underline{city} \underline{as} \underline{Kate}. \underline{James} \underline{lives} \underline{in}}\\
        \midrule
         \multirow{2}{*}{\makecell{Removing and adding \\irrelevant background}} & a. \textcolor{orange}{Kate lives in Paris and she likes dancing.} \textcolor{teal}{James lives in the same city as Kate. James lives in}\\
         & b. \textcolor{orange}{Kate lives in Paris and John lives in Seattle.} \textcolor{teal}{James lives in the same city as Kate. James lives in} \\
         \midrule
         \multirow{2}{*}{Semantic paraphrasing} & a. \textcolor{orange}{Kate resides in the city of Paris.} \textcolor{teal}{James lives in the same city as Kate. James lives in} \\
         &b. \textcolor{orange}{Kate's current residence is in Paris.} \textcolor{teal}{James lives in the same city as Kate. James lives in} \\
         \midrule
         \multirow{5}{*}{Renaming} &a. \textcolor{orange}{Anna lives in Paris.} \textcolor{teal}{James lives in the same city as Anna. James lives in} \\
         &b. \textcolor{orange}{Tina lives in Paris.} \textcolor{teal}{James lives in the same city as Tina. James lives in} \\ 
         &c. \textcolor{orange}{Emma lives in Paris.} \textcolor{teal}{James lives in the same city as Emma. James lives in} \\ 
         &d. \textcolor{orange}{Cynthia lives in Paris.} \textcolor{teal}{James lives in the same city as Cynthia. James lives in} \\ 
         &e. \textcolor{orange}{Emily lives in Paris.} \textcolor{teal}{James lives in the same city as Emily. James lives in} \\
    \bottomrule 
    \caption{List of prompts for the question-answer task. The premise is highlighted in orange and the question is highlighted in teal. For each prompt, there are 10 logically equivalent prompts including the original prompt. The selected 10 words are annotated in original prompts with underlines.}
    \label{tab:logical_equivalent_full}
\end{longtable}}

\subsection{Proof of universal matching properties of decomposed effects}\label{appendix:universal matching prove}

\begin{lemma}
    Given an LLM $v$ and an input prompt $\hat{\mathbf{x}}$, for any randomly masked sample $\hat{\mathbf{x}}_T, T\subseteq N$, we have
    \begin{equation}\label{eq:universal_effects}
    \begin{aligned}
        v(x_{n+1}|\hat{\mathbf{x}}_T) &=\sum\nolimits_{S\in \Omega_{\text{\rm and}}(\hat{\mathbf{x}}):\emptyset\neq S\subseteq T}\left(\mathcal{J}^{\text{\rm f}}_{\text{\rm and}}(S|\hat{\mathbf{x}})\!+\!\mathcal{J}^{\text{\rm c}}_{\text{\rm and}}(S|\hat{\mathbf{x}})\!+\!\mathcal{K}_{\text{\rm and}}(S|\hat{\mathbf{x}})\right) \\
        &+\sum\nolimits_{S\in \Omega_{\text{\rm or}}(\hat{\mathbf{x}}):S\cap T\neq \emptyset}\left(\mathcal{J}^{\text{\rm f}}_{\text{\rm or}}(S|\hat{\mathbf{x}}) \!+\!\mathcal{J}^{\text{\rm c}}_{\text{\rm or}}(S|\hat{\mathbf{x}}) \!+\! \mathcal{K}_{\text{\rm or}}(S|\hat{\mathbf{x}})\right) + v(x_{n+1}|\hat{\mathbf{x}}_\emptyset).
    \end{aligned}
    \end{equation}
\end{lemma}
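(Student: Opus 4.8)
The plan is to reduce the claimed identity to the universal matching property already established in Theorem~\ref{thm:universalMatching} together with the explicit formulas~(\ref{eq:J_f}),~(\ref{eq:J_c}) and~(\ref{eq:K}) that define the decomposed effects. The key observation is that all three families of decomposed effects are, by construction, linear combinations of ordinary AND-OR interactions (either of $\hat{\mathbf{x}}$, of $\hat{\mathbf{x}}_q$, or of the averaged bag $\mathbb{X}$), so summing them should simply telescope back to $I_{\text{\rm and}}(S|\hat{\mathbf{x}})$ and $I_{\text{\rm or}}(S|\hat{\mathbf{x}})$.

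First I would fix a masked subset $T\subseteq N$ and add the three summands pointwise inside each sum over $S$. For the AND part, equations~(\ref{eq:J_f})--(\ref{eq:K}) give
$\mathcal{J}^{\text{\rm f}}_{\text{\rm and}}(S|\hat{\mathbf{x}})+\mathcal{J}^{\text{\rm c}}_{\text{\rm and}}(S|\hat{\mathbf{x}})+\mathcal{K}_{\text{\rm and}}(S|\hat{\mathbf{x}})
= I_{\text{\rm and}}(S|\hat{\mathbf{x}}_q) + \bigl(I_{\text{\rm and}}(S|\hat{\mathbf{x}})-I_{\text{\rm and}}(S|\mathbb{X})\bigr) + \bigl(I_{\text{\rm and}}(S|\mathbb{X})-I_{\text{\rm and}}(S|\hat{\mathbf{x}}_q)\bigr) = I_{\text{\rm and}}(S|\hat{\mathbf{x}})$,
and identically for the OR part. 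Hence the right-hand side of~(\ref{eq:universal_effects}) collapses to
$\sum_{S\in\Omega_{\text{\rm and}}(\hat{\mathbf{x}}):\emptyset\neq S\subseteq T}I_{\text{\rm and}}(S|\hat{\mathbf{x}}) + \sum_{S\in\Omega_{\text{\rm or}}(\hat{\mathbf{x}}):S\cap T\neq\emptyset}I_{\text{\rm or}}(S|\hat{\mathbf{x}}) + v(x_{n+1}|\hat{\mathbf{x}}_\emptyset)$,
which is exactly the approximate expansion appearing in the last line of Theorem~\ref{thm:universalMatching}. Applying that theorem (with the sparsity-based truncation to $\Omega_{\text{\rm and}}$ and $\Omega_{\text{\rm or}}$) then yields $v(x_{n+1}|\hat{\mathbf{x}}_T)$ up to the same $\tau$-controlled error, which is what the statement asserts.

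The one technical point that needs care — and the place I expect the real work to be — is the bookkeeping around the index sets. The decomposed effects $\mathcal{J}^{\text{\rm f}}$ are defined via interactions of $\hat{\mathbf{x}}_q$, whose salient-interaction set $\Omega(\hat{\mathbf{x}}_q)$ need not coincide with $\Omega(\hat{\mathbf{x}})$; similarly for $\Omega(\mathbb{X})$. So before telescoping one should pass to the full (untruncated) Möbius-style sums over all $S\subseteq N$, where the exact identity $\sum I_{\text{\rm and}}(S|\hat{\mathbf{x}}_T)$-style decomposition holds with no error, perform the cancellation there, and only then re-truncate to the salient sets, invoking Theorem~\ref{thm:upperbound} to bound the discarded non-salient terms. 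Concretely, I would (i) write each of $I(\cdot|\hat{\mathbf{x}})$, $I(\cdot|\hat{\mathbf{x}}_q)$, $I(\cdot|\mathbb{X})$ over the common index set $2^N$, noting that $I(\cdot|\mathbb{X})$ is well-defined because the $n$ annotated words are shared across all logically equivalent prompts (so $v(x_{n+1}|\mathbb{X}_T)=\mathbb{E}_i[v(x_{n+1}|\mathbf{x}^i_T)]$ makes sense for every $T$); (ii) use linearity of the AND/OR transform in the underlying $v$ to confirm the pointwise cancellation above holds for \emph{every} $S$, not just salient ones; (iii) apply the exact part of Theorem~\ref{thm:universalMatching} ($f(\hat{\mathbf{x}}_T)=v(x_{n+1}|\hat{\mathbf{x}}_T)$ with all $2^n$ terms) to get the identity without approximation, or its truncated form to get it up to $O(\tau)$. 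Everything else is routine substitution, so the proposal is essentially: telescope the definitions, then cite Theorem~\ref{thm:universalMatching}.
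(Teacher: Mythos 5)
Your proposal is correct and matches the paper's own proof essentially verbatim: the paper proves the lemma by telescoping equations~(\ref{eq:J_f})--(\ref{eq:K}) to show that the three decomposed effects sum pointwise to $I_{\text{\rm and}}(S|\hat{\mathbf{x}})$ (and likewise for OR), and then invokes Theorem~\ref{thm:universalMatching}. Your extra remarks about index-set bookkeeping and performing the cancellation over all $S\subseteq N$ before truncating are sound but are not treated separately in the paper, which simply observes that the telescoping holds pointwise and cites the theorem.
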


\begin{proof}
    According to Equations~(\ref{eq:J_f}), (\ref{eq:J_c}) and (\ref{eq:K}), we have
    \begin{equation*}
    \begin{aligned}
        &\mathcal{J}^{\text{\rm f}}_{\text{\rm and}}(S|\hat{\mathbf{x}})\!+\!\mathcal{J}^{\text{\rm c}}_{\text{\rm and}}(S|\hat{\mathbf{x}})\!+\!\mathcal{K}_{\text{\rm and}}(S|\hat{\mathbf{x}}) \\
        = &I_{\text{and}}(S|\hat{\mathbf{x}}_q) + I_{\text{and}}(S|\hat{\mathbf{x}}) - I_{\text{and}}(S|\hat{\mathbb{X}}) + I_{\text{and}}(S|\hat{\mathbb{X}}) -  I_{\text{and}}(S|\hat{\mathbf{x}}_q) \\
        =& I_{\text {and}}(S|\hat{\mathbf{x}}),
    \end{aligned}  
    \end{equation*}
    and
        \begin{equation*}
    \begin{aligned}
        &\mathcal{J}^{\text{\rm f}}_{\text{\rm or}}(S|\hat{\mathbf{x}})\!+\!\mathcal{J}^{\text{\rm c}}_{\text{\rm or}}(S|\hat{\mathbf{x}})\!+\!\mathcal{K}_{\text{\rm or}}(S|\hat{\mathbf{x}}) \\
        = &I_{\text{or}}(S|\hat{\mathbf{x}}_q) + I_{\text{or}}(S|\hat{\mathbf{x}}) - I_{\text{or}}(S|\hat{\mathbb{X}}) + I_{\text{or}}(S|\hat{\mathbb{X}}) -  I_{\text{or}}(S|\hat{\mathbf{x}}_q) \\
        =& I_{\text {or}}(S|\hat{\mathbf{x}}).
    \end{aligned}  
    \end{equation*}
    Here, $\hat{\mathbb{X}} = \{\hat{\mathbf{x}}^1, \dots, \hat{\mathbf{x}}^k\}$.
    In this way, Equation~(\ref{eq:universal_effects}) can be rewritten as $v(x_{n+1}|\hat{\mathbf{x}}_T) =\sum\nolimits_{S\in \Omega_{\text{\rm and}}(\hat{\mathbf{x}}):\emptyset\neq S\subseteq T}I_{\text{and}}(S|\hat{\mathbf{x}}) 
        +\sum\nolimits_{S\in \Omega_{\text{\rm or}}(\hat{\mathbf{x}}):S\cap T\neq \emptyset}I_{\text{or}}(S|\hat{\mathbf{x}})  + v(x_{n+1}|\hat{\mathbf{x}}_\emptyset)$, which can be proven by Theorem~\ref{thm:universalMatching}.
\end{proof}
\subsection{Experiments on faithfulness of interaction explanations}\label{appendix:faithfulness}
The faithfulness of interaction-based explanations is guaranteed by the universal-matching property, see Theorem~\ref{thm:universalMatching}. To empirically verify whether the extracted interactions matched well with the LLM's output $v(x_{n+1}|\mathbf{x}_S)$, we followed~\cite{shen2023can} to measure the approximation errors. We used $\mathbf{v} = [v(x_{n+1}|\mathbf{x}_{S_1}), v(x_{n+1}|\mathbf{x}_{S_2}), \dots, v(x_{n+1}|\mathbf{x}_{S_{2^n}})] \in \mathbb{R}^{2^n}$ to denote the LLM's output vector on all $2^n$ masked samples, where the outputs were sorted in ascending order, \emph{i.e.}, $v(x_{n+1}|\mathbf{x}_{S_1})\leq v(x_{n+1}|\mathbf{x}_{S_2})\leq \dots \leq v(x_{n+1}|\mathbf{x}_{S_{2^n}})$. Then, we used $\mathbf{v}^{\text{approx}} = [v^{\text{approx}}(x_{n+1}|\mathbf{x}_{S_1}), v^{\text{approx}}(x_{n+1}|\mathbf{x}_{S_2}), \dots, v^{\text{approx}}(x_{n+1}|\mathbf{x}_{S_{2^n}})] \in \mathbb{R}^{2^n}$ to represent the output vector approximated by decomposed effects (see Equation~(\ref{eq:universal_effects})). In this way, for each $i$-th masked sentence, $e_i = \vert \mathbf{v}(x_{n+1}|\mathbf{x}_{S_i}) - \mathbf{v}^{\text{approx}}(x_{n+1}|\mathbf{x}_{S_i})\vert$ represented the matching error. 

Specifically, we used top-ranked $50, 100, 150,$ and $200$ interactions and measured the matching errors. Figure~\ref{fig:fig11} shows LLaMA's confidence scores on all $2^n$ masked samples in ascending order. We averaged the matching errors of every 50 neighboring masked samples to provide a smoothed visualization of the matching error. We can see that the confidence scores on all masked samples can be well approximated by the decomposed effects.

\begin{figure}[t]
    \centering
    \includegraphics[clip, trim=2cm 0cm 0cm 0cm, width=0.9\textwidth]{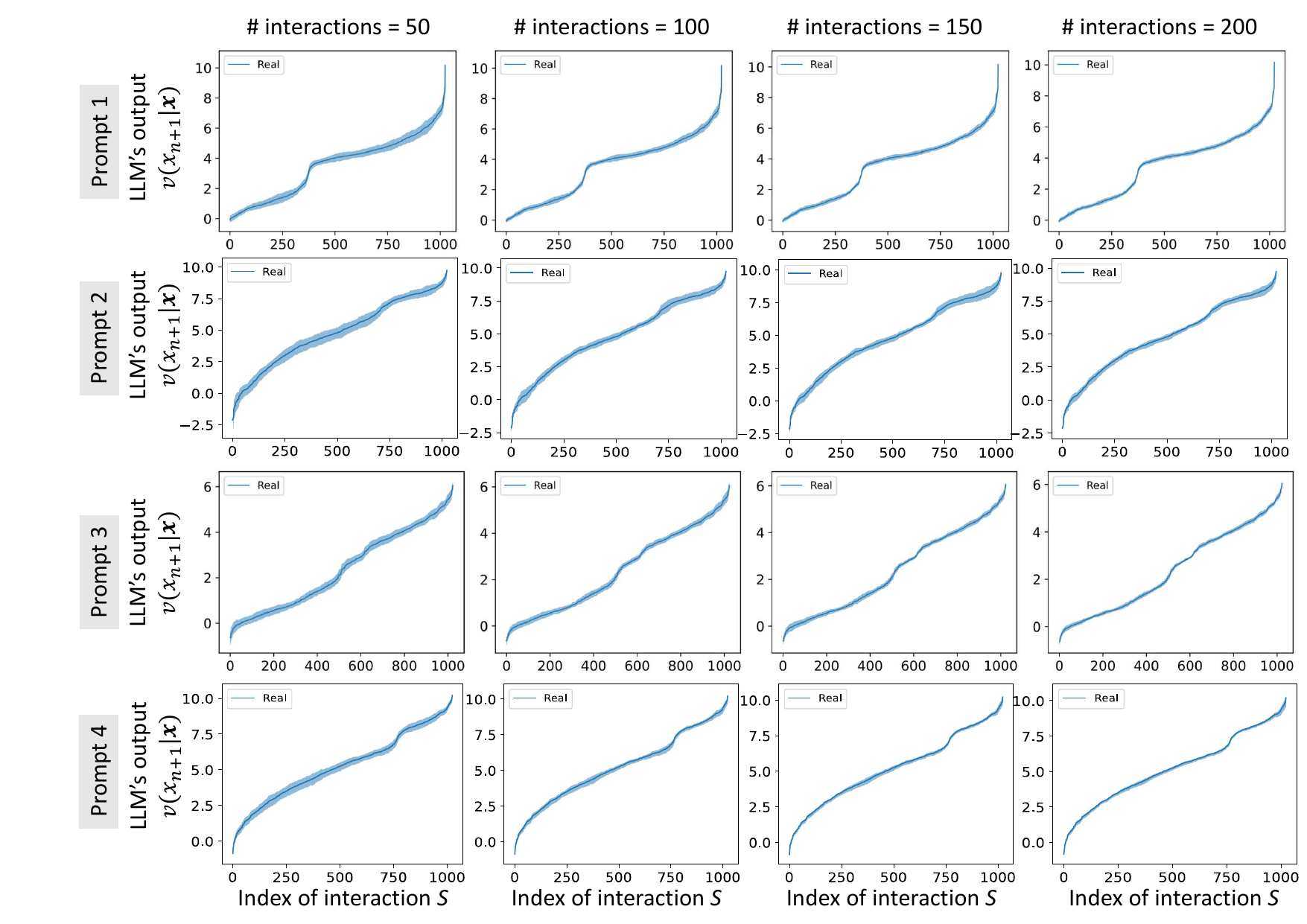}
    \caption{LLM's confidence score on differently masked samples (sorted in an ascending order). The shade area represents the matching error of using the top-ranked salience interactions to match the real LLM's output.}
    \label{fig:fig11}
\end{figure}

\subsection{Experimental details} \label{appendix:experimentaldetails}
We conducted experiments on three LLMs: OPT-1.3B~\cite{zhang2022opt}, LLaMA-7B~\cite{touvron2023llama}, and GPT-3.5-Turbo~\cite{brown2020language}, using the question-answer samples listed in Table~\ref{tab:logical_equivalent_full}. The experiments used publicly available pre-trained LLMs and were run on an NVIDIA A800 GPU.

\textbf{Selecting input variables.} Theoretically, given an input prompt with $n$ input variables (either tokens or words, in this paper, we use words to facilitate semantic analysis), the time complexity of extracting AND-OR interactions is $2^n$ (requiring inference on $2^n$ masked samples). In real-world applications, input prompts often contain a large number of words,  making such computations infeasible. To address this issue, we followed the method described in ~\cite{shen2023can} to select a set of words as input variables and treat the remaining words as a constant background. This approach allows for the extraction of AND-OR interactions among the selected words. Specifically, we selected 10 words in the question of each sample, and they were underlined in the original prompts in Table~\ref{tab:logical_equivalent_full}. For those prompts longer than 10 words, we excluded some words with no clear semantic meanings, such as articles, prepositions, and conjunctions.

Notably, the $10$ input words annotated in the original prompt should also be identified and annotated in the remaining logically equivalent prompts. For the ``Renaming'' category, while we changed the names of entities involved in the reasoning process, we annotated the corresponding new names as replacements for the originally annotated words.

\subsection{More experiment results} \label{appendix:result}

\textbf{The ratio of reasoning effects and the ratio of chaotic memorization effects.} Here, we reported the standard deviations of the ratio of reasoning effects $\rho_{\text{r}}$ and the ratio of chaotic memorization effects $\rho_{\text{c}}$ in Table~\ref{tab:ratio2}.

  \begin{table}[t]
  \begin{minipage}[c]{0.57\textwidth}
  \centering
  {\tiny
    \begin{tabular}{c ccc}
    \toprule
    &OPT-1.3B & LLaMA-7B & GPT-3.5-Turbo \\
    \midrule
    $\rho_{\text{r}} \uparrow$ & $39.12\% \pm 7.04\%$  &  $42.82\% \pm 3.30\%$& $\mathbf{45.24\%}\pm 7.09\%$ \\
    \midrule
    $\rho_{\text{c}} \downarrow$ & $\mathbf{7.37}\% \pm 3.65\%$  &  $7.57\%\pm 2.73\%$& $7.81\% \pm 4.00\%$ \\
    \bottomrule 
    \end{tabular}}
      \end{minipage}
      \begin{minipage}[c]{0.4\textwidth}
          \caption{The ratio $\rho_{\text{r}}$ of in-context reasoning effects and the ratio $\rho_{\text{c}}$ of chaotic memorization effects (with standard deviations).}
    \label{tab:ratio2}
      \end{minipage}
\end{table} 
\textbf{Visualization of the decomposed effects on a single sample.} Figures~\ref{fig:fig8}, \ref{fig:fig9} and \ref{fig:fig10} visualize the foundational memorization effects, chaotic memorization effects and reasoning effects over different orders $m$, extracted from three LLMs on three different prompts, respectively. Additionally, the reasoning effects were categorized into enhanced inference patterns, eliminated inference patterns and reversed inference patterns.
\begin{figure}[t]
    \centering
    \includegraphics[clip, trim=0cm 5cm 0cm 0cm, width=\textwidth]{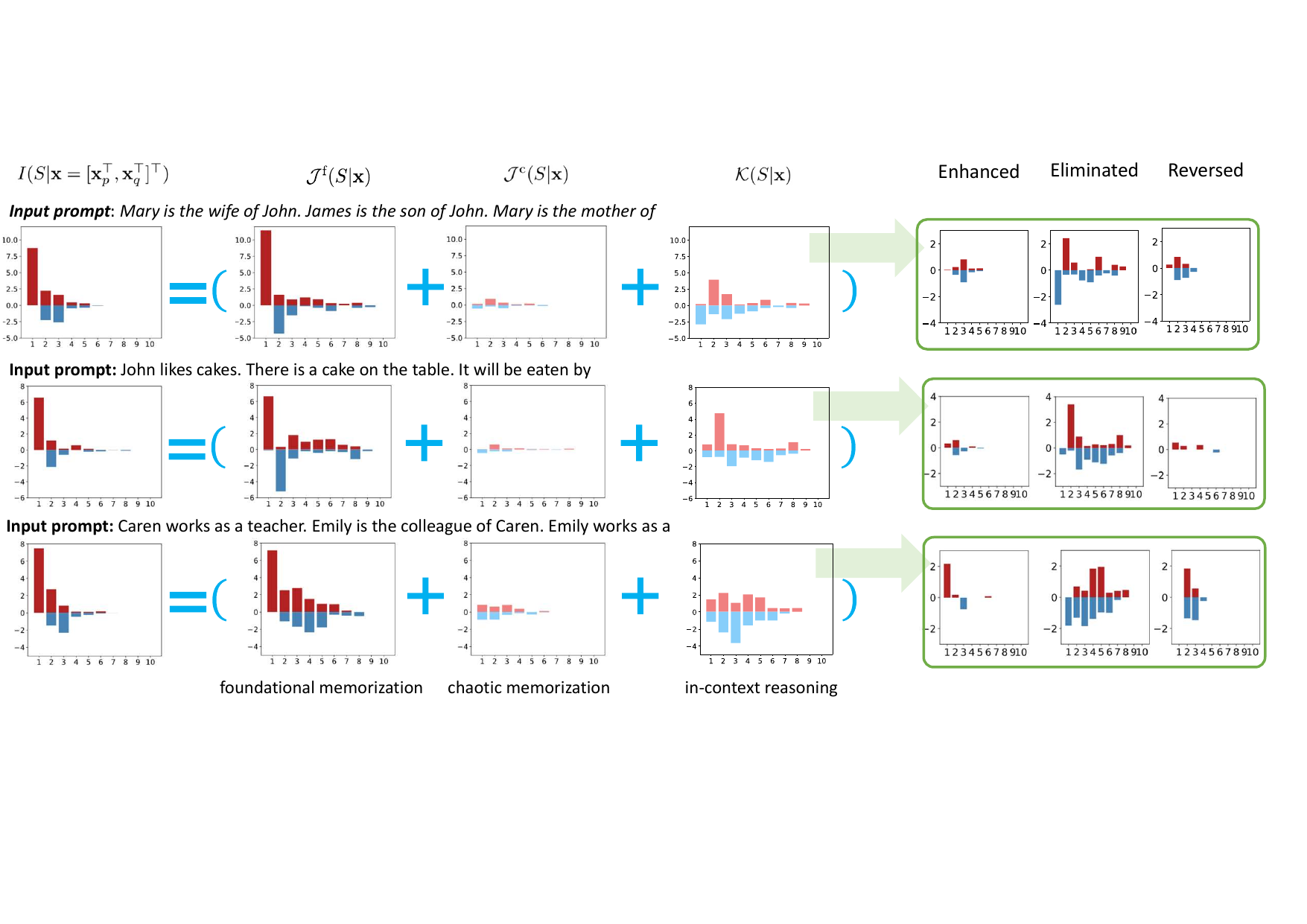}
    \caption{Distribution of the strength of foundational memorization interactions, chaotic memorization interactions and in-context reasoning interactions over different orders $m$ extracted from LLaMA-7B on a single prompt.}
    \label{fig:fig8}
\end{figure}

\begin{figure}[t]
    \centering
    \includegraphics[clip, trim=0cm 5cm 0cm 0cm, width=\textwidth]{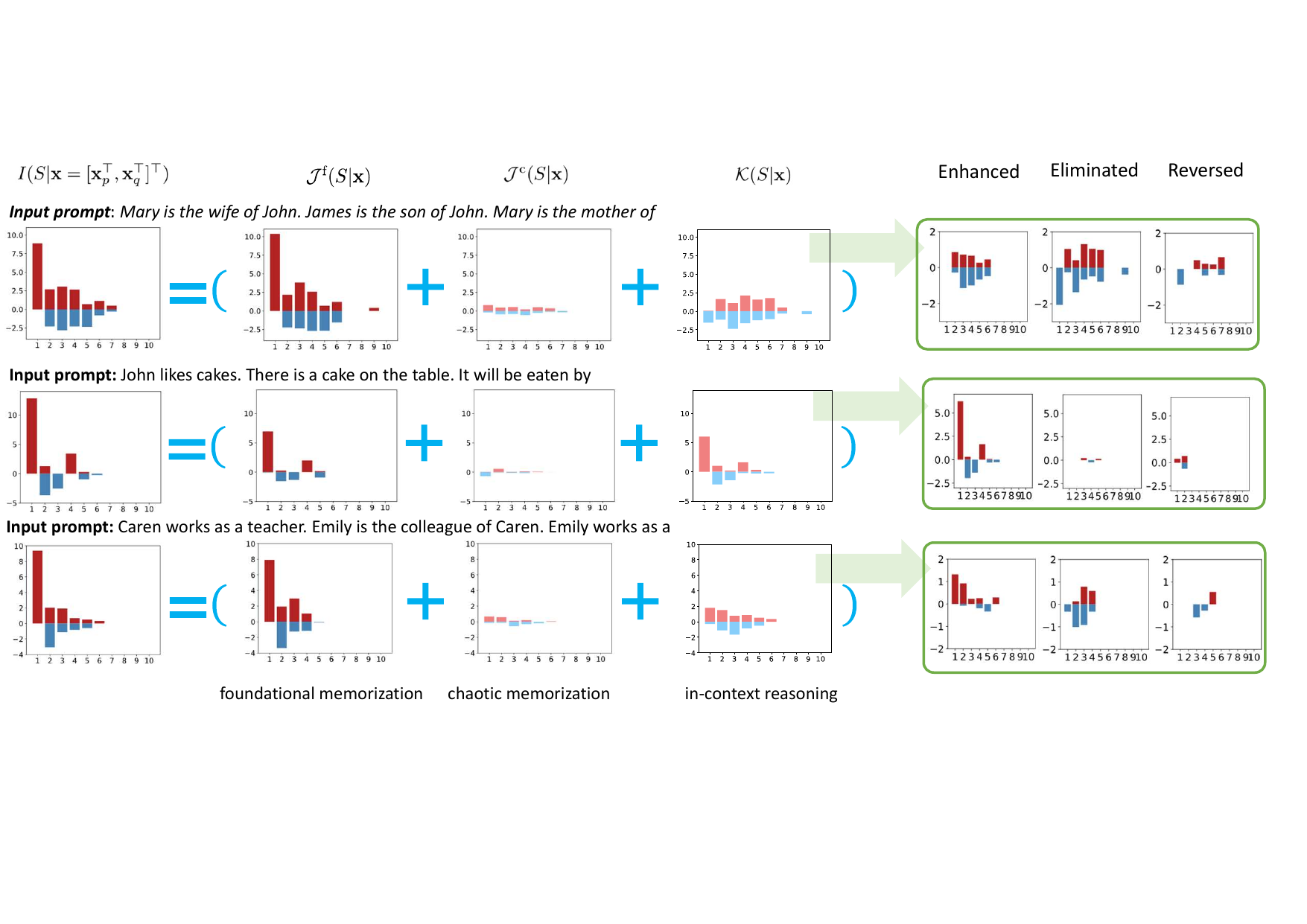}
    \caption{Distribution of the strength of foundational memorization interactions, chaotic memorization interactions and in-context reasoning interactions over different orders $m$ extracted from OPT-1.3B on a single prompt.}
    \label{fig:fig9}
\end{figure}

\begin{figure}[t]
    \centering
    \includegraphics[clip, trim=0cm 5cm 0cm 0cm, width=\textwidth]{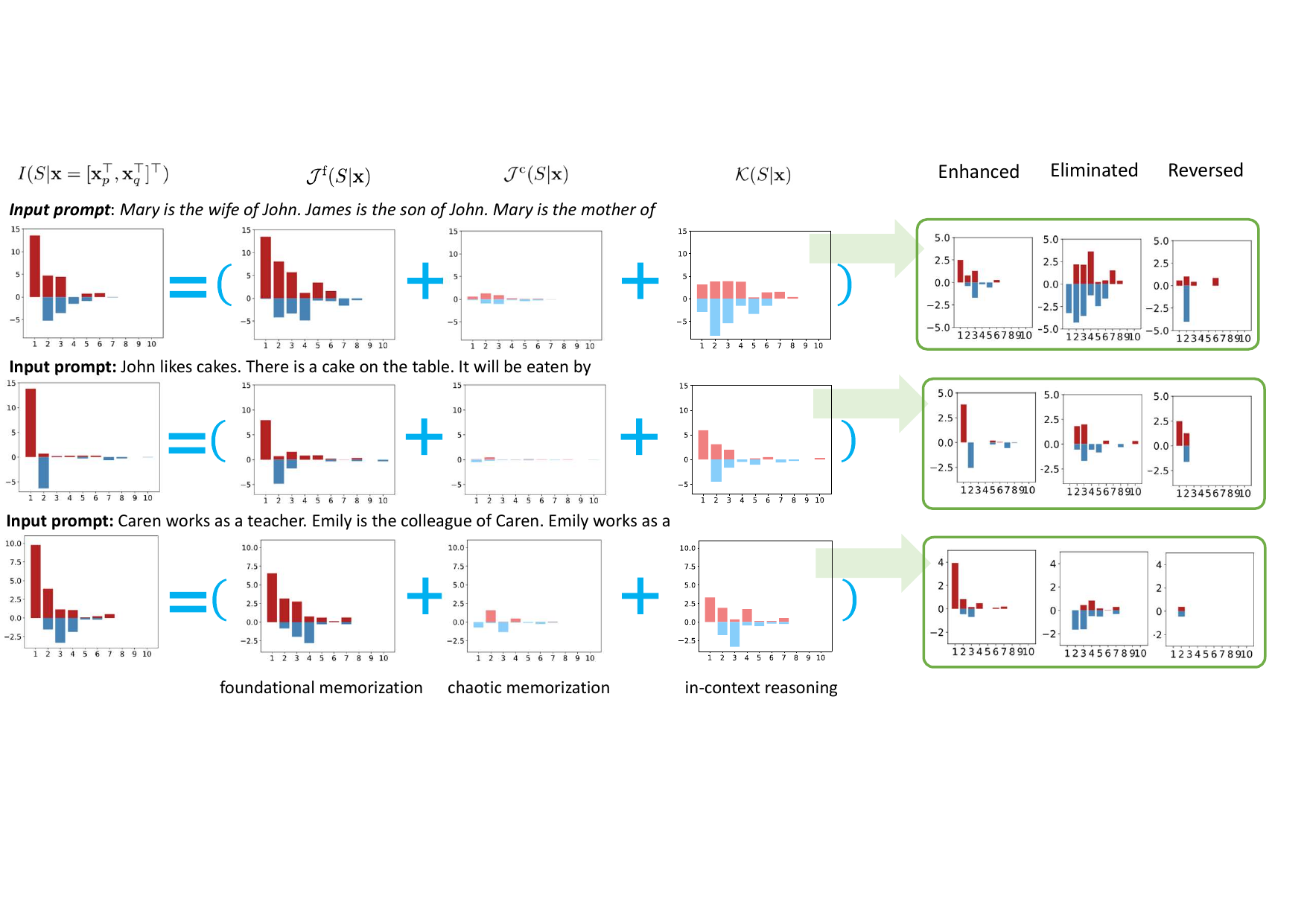}
    \caption{Distribution of the strength of foundational memorization interactions, chaotic memorization interactions and in-context reasoning interactions over different orders $m$ extracted from GPT-3.5-Turbo on a single prompt.}
    \label{fig:fig10}
\end{figure}

\end{document}